\documentclass[
    sigconf = true,     
    screen = true,      
]{acmart}

\newboolean{arxiv} 
\setboolean{arxiv}{true} 

\hyphenation{analysis onemax}

\usepackage{xspace}
\usepackage{algpseudocode}
\usepackage[ruled,vlined,linesnumbered]{algorithm2e}
\usepackage{wrapfig}

\allowdisplaybreaks[4]
\clubpenalty=10000
\widowpenalty=10000

\newcommand{\oea}{\mbox{$(1 + 1)$~EA}\xspace}

\newcommand{\oplea}{\mbox{$(1+\lambda)$~EA}\xspace}

\newcommand{\oclea}{\mbox{$(1,\lambda)$~EA}\xspace}

\newcommand{\MAHH}{MAHH\xspace}

\newcommand{\onemax}{\textsc{OneMax}\xspace}
\newcommand{\LO}{\textsc{Leading\-Ones}\xspace}
\newcommand{\leadingones}{\LO}

\newcommand{\cliff}{\textsc{Cliff}\xspace}

\newcommand{\jump}{\textsc{Jump}\xspace}

%


\newcommand{\eps}{\varepsilon}

\let\originalleft\left
\let\originalright\right
\renewcommand{\left}{\mathopen{}\mathclose\bgroup\originalleft}
\renewcommand{\right}{\aftergroup\egroup\originalright}

\newcommand{\eqnref}[1]{(\ref{#1})}


\AtBeginDocument{%
  \providecommand\BibTeX{{%
    \normalfont B\kern-0.5em{\scshape i\kern-0.25em b}\kern-0.8em\TeX}}}

\setcopyright{acmcopyright}
\copyrightyear{2023}
\acmYear{2023}
\setcopyright{acmlicensed}\acmConference[GECCO '23]{Genetic and
Evolutionary Computation Conference}{July 15--19, 2023}{Lisbon, Portugal}
\acmBooktitle{Genetic and Evolutionary Computation Conference (GECCO '23),
July 15--19, 2023, Lisbon, Portugal}
\acmPrice{15.00}
\acmDOI{10.1145/3583131.3590509}
\acmISBN{979-8-4007-0119-1/23/07}



\begin{document}

\title{How the Move Acceptance Hyper-Heuristic Copes With Local Optima: Drastic Differences Between Jumps and Cliffs}\ifthenelse{\boolean{arxiv}}{\titlenote{Author-generated version.}}{}

\author{Benjamin Doerr}
\affiliation{%
  \institution{Laboratoire d’Informatique (LIX),\\
École Polytechnique, CNRS, \\
Institut Polytechnique de Paris,}
  \city{Palaiseau}
  \country{France}
}

\author{Arthur Dremaux}
\affiliation{%
  \institution{
École Polytechnique, \\
Institut Polytechnique de Paris,}
  \city{Palaiseau}
  \country{France}
}

\author{Johannes Lutzeyer}
\affiliation{%
  \institution{Laboratoire d’Informatique (LIX),\\
École Polytechnique, CNRS, \\
Institut Polytechnique de Paris,}
  \city{Palaiseau}
  \country{France}
}

\author{Aur\'elien Stumpf}
\affiliation{%
  \institution{
  École Polytechnique, \\
Institut Polytechnique de Paris,}
  \city{Palaiseau}
  \country{France}
}

\renewcommand{\shortauthors}{Benjamin Doerr, Arthur Dremaux, Johannes Lutzeyer, and Aur\'elien Stumpf}

\begin{abstract}
In recent work, Lissovoi, Oliveto, and Warwicker (Artificial Intelligence (2023)) proved that the Move Acceptance Hyper-Heuristic (\MAHH) leaves the local optimum of the multimodal cliff benchmark  with remarkable efficiency. With its $O(n^3)$ runtime, for almost all cliff widths $d,$ the \MAHH massively outperforms the $\Theta(n^d)$ runtime of simple elitist evolutionary algorithms (EAs). For the most prominent multimodal benchmark, the jump functions, the given runtime estimates of $O(n^{2m} m^{-\Theta(m)})$ and $\Omega(2^{\Omega(m)})$, for gap size $m \ge 2$, are far apart and the real performance of \MAHH is still an open question.
  
In this work, we resolve this question. We prove that for any choice of the \MAHH selection parameter~$p$, the expected runtime of the \MAHH on a jump function with gap size $m = o(n^{1/2})$ is at least $\Omega(n^{2m-1} / (2m-1)!)$. This renders the \MAHH much slower than simple elitist evolutionary algorithms with their typical $O(n^m)$ runtime. 
  
We also show that the \MAHH with the global bit-wise mutation operator instead of the local one-bit operator optimizes jump functions in time $O(\min\{m n^m,\frac{n^{2m-1}}{m!\Omega(m)^{m-2}}\})$, essentially the minimum of the optimization times of the \oea and the \MAHH. This suggests that combining several ways to cope with local optima can be a fruitful approach. 
\end{abstract}


\ccsdesc[300]{Theory of computation~Theory of randomized search heuristics}

\begin{CCSXML}
<ccs2012>
   <concept>
       <concept_id>10002950.10003714.10003716.10011136.10011797.10011799</concept_id>
       <concept_desc>Mathematics of computing~Evolutionary algorithms</concept_desc>
       <concept_significance>500</concept_significance>
       </concept>
 </ccs2012>
\end{CCSXML}

\ccsdesc[500]{Mathematics of computing~Evolutionary algorithms}

\keywords{Keywords: Hyper-heuristics, non-elitism, mutation, cliff benchmark.}

{\sloppy
		
\maketitle

\ifthenelse{\boolean{arxiv}}{\pagestyle{plain}}{}

\section{Introduction}

Compared to the intensive analysis of evolutionary algorithms (EAs) via mathematical means over the last 30 years~\cite{NeumannW10,AugerD11,Jansen13,DoerrN20}, the rigorous analysis of hyper-heuristics is much less developed and has so far mostly focused on unimodal problems (however, with highly interesting results, see Section~\ref{sec:prev} for more detail).

In the first and so far only work on how hyper-heuristics solve multimodal problems, that is, how they cope with the presence of true local optima, Lissovoi, Oliveto, and Warwicker have shown the remarkable result that the simple move-acceptance hyper-heuristic (\MAHH), which randomly mixes elitist selection with accepting any new solution, optimizes all \cliff functions in time $O(n^3)$ (and depending on the cliff width~$d$ even better). The \MAHH thus struggles much less with the local optimum of this benchmark than elitist EAs (typically having a runtime of $\Theta(n^d)$~\cite{PaixaoHST17}), the \oclea having a runtime of roughly $O(n^{3.98})$ (shown for $d=n/3$ only, this result requires a very careful choice of the population size)~\cite{FajardoS21foga}, and the Metropolis algorithm, which has a runtime of $\Omega(n^{d-0.5} / (\log n)^{d-1.5})$ for constant $d$ and a super-polynomial runtime for super-constant~$d$.

The surprisingly good performance of the \MAHH on \cliff functions raises the question of whether the convincing performance on \cliff of the \MAHH generalizes to other functions or whether it a particularity of the \cliff benchmark. To answer this question, we study the performance of the \MAHH on the multimodal benchmark most prominent in the mathematical runtime analysis of randomized search heuristics, the \jump benchmark. For this problem, with jump size $m\ge 2$, only the loose bounds 
$O(n^{2m} m^{-\Theta(m)})$ and $\Omega(2^{\Omega(m)})$
were shown in~\cite{LissovoiOW23}. These bounds allow no conclusive comparison with simple evolutionary algorithms, which typically have a  $\Theta(n^m)$ runtime. 

In this work, we prove a general non-asymptotic lower bound for the runtime of the \MAHH on \jump functions, valid for all values of the problem size~$n$, the jump size~$m \ge 2$, and the mixing parameter~$p$ of the hyper-heuristic. As the most interesting special case of this bound, we derive that for $m = o(n^{1/2})$ and all values of $p$, this runtime is at least $\Omega(\frac{n^{2m-1}}{(2m-1)!})$. This is significantly larger than the $\Theta(n^m)$ runtime of many evolutionary algorithms. 

Our lower bound is relatively tight. We prove that the \MAHH with $p = \frac mn$ has a runtime of $O(\frac{n^{2m-1}}{m! m^{m-2}})$. In particular, for constant jump size~$m$, which is the only regime in which polynomial runtimes exist, the runtime of the \MAHH with optimal parameter choice is $\Theta(n^{2m-1})$. Therefore, our bounds allow us to answer the question on the generalization of the good performance of the \MAHH: The \MAHH performs comparatively poorly on \jump functions, i.e., the surprisingly good performance on \cliff does not extend to \jump functions. 

Noting the significant performance gap between, e.g., the \oea and the \MAHH, we propose to use the \MAHH with bit-wise mutation, the variation operator of the \oea, instead of one-bit flips. This global mutation operators renders the analysis of the \MAHH significantly more complex and in particular forbids the use of our lower bound arguments. With a suitable potential function in the drift argument, we manage to adapt our upper bound proof and show an upper bound of $O(\min\{m n^m,\frac{n^{2m-1}}{m!\Omega(m)^{m-2}}\})$. This shows that, in principle, combining the global variation operator popular in evolutionary computation with a local search hyper-heuristic can be an interesting approach. We note that this was suggested for the Metropolis algorithm in~\cite{DoerrERW23}, but analyzed only via experiments on \cliff functions with problem size $n=100$ and cliff height $d=3$. In particular, no proven performance guarantees were given in~\cite{DoerrERW23}. 

We believe that this is a promising direction for future research, but we also note that we were not able to prove an $O(n^3)$ upper bound for this new algorithm on \cliff functions, so we cannot rule out that the strong performance of the classic \MAHH on \cliff functions is lost when using bit-wise mutation. More research in this direction to answer such questions is needed. At the moment, the main missing piece towards more progress is a method to prove lower bounds when less restricted mutation operators than one-bit flips are used.

\section{Previous Works}\label{sec:prev}

We now describe the main previous works. This being a mathematical runtime analysis, we concentrate on such works. We refer to~\cite{BurkeGHKOOQ13} for a general introduction to hyper-heuristics. We refer to~\cite{NeumannW10,AugerD11,Jansen13,DoerrN20} for introductions to mathematical runtime analyses of randomized search heuristics.

\subsection{Hyper-Heuristics}

The mathematical runtime analysis of randomized search heuristics, mostly concerned with the analysis of evolutionary algorithms, is an established field for more than 30 years now~\cite{NeumannW10,AugerD11,Jansen13,DoerrN20}. The mathematical runtime analysis of hyper-heuristics was started much more recently by Lehre and \"Oczan~\cite{LehreO13} in work on random mixing of mutation operators and random mixing of acceptance operators. We refer to the survey~\cite[Section~8]{DoerrD20bookchapter} for a detailed discussion of random mixing of mutation operators and, in particular, a comparison to very similar earlier works that were not called hyper-heuristics at their time such as~\cite{GielW03,NeumannW07}. We discuss the results on acceptance operators further below.

In~\cite{AlanaziL14}, for the first time, more complex selection hyper-heuristics for choosing suitable mutation operators were studied, however, no superiority over random mixing could be shown. 
That this is not the fault of the authors, but rather a misconfiguration of these hyper-heuristics, was shown in the seminal work of Lissovoi, Oliveto, and Warwicker~\cite{LissovoiOW20ecj}. In this work, the authors first show that the three more complex hyper-heuristics optimize the \leadingones benchmark in exactly (ignoring lower order terms) the same runtime as the simple random mixing heuristics. 
They observe that the true reason for this behavior is the short-sightedness of these heuristics: As soon as no improvement is found, they switch to a different low-level heuristic. If instead longer learning periods are used, that is, a change of the currently used low-level heuristic is made only after a certain number of successive failures, then the more complex hyper-heuristics perform very well and attain the best possible performance that can be obtained from the given low-level heuristics. 
The length of this learning period is crucial for the success of the hyper-heuristic, but as shown in~\cite{DoerrLOW18}, it can be determined in a self-adjusting manner. 
These works were extended to other benchmark problems in~\cite{LissovoiOW20aaai}.

Besides some artificial problems constructed to demonstrate particular effects, all works above regard unimodal benchmarks. The first and so far only work to analyze the performance of hyper-heuristics on multi-modal problems is~\cite{LissovoiOW23}. This work takes up the random mixing of selection operators idea of~\cite{LehreO13}. There, a random mixing between an all-moves ($\text{ALLMOVES}$) operator, accepting any new solution, and an only-improving ($\text{ONLYIMPROVING}$) operator, accepting only strict improvements, was proposed and it was shown that the (unimodal) royal-road problem can only be solved by mixing these two operators. As noted in~\cite{LissovoiOW23}, this result heavily relies on the fact that the $\text{ONLYIMPROVING}$ operator was used instead of the, in evolutionary computation more common, improving-and-equal ($\text{IMPROVINGANDEQUAL}$) operator, which accepts strict improvements and equally good solutions. 

The results in~\cite{LissovoiOW23} do not depend on this particularity. For this reason, let us refer to the hyper-heuristic which randomly mixes between the $\text{ALLMOVES}$ (with probability~$p$, the only parameter of the algorithm) and either of the $\text{ONLYIMPROVING}$ and $\text{IMPROVINGANDEQUAL}$ operator, as the  \emph{move-acceptance hyper-heuristic (\MAHH)}. In other words, the \MAHH heuristic starts with a random solution and then, in each iteration, moves to a random neighbor of the current solution and, with probability~$p$  always accepts this move  and with probability $1-p$ only accepts this move if the new solution is at least as good or strictly better (for the problems discussed in the remainder, both versions will behave identical). See Section~\ref{sec:preliminaries} for a more detailed description of the algorithm. 

The striking result of~\cite{LissovoiOW23} is that the simple \MAHH with mixing parameter $p = \frac{1}{(1+\eps)n}$, $\eps > 0$ a constant, optimizes the \cliff benchmark  with cliff width~$d$ in an expected number of $O(\frac{n^3}{d^2} + n \log n)$ iterations. This is remarkably efficient compared to the runtimes of other algorithms on this benchmark, see Section \ref{sec:runtime_literature}. This result raises the question if \MAHH has the general strength of leaving local optima or whether this behavior is particular to the \cliff problem, which with its particular structure (a single acceptance of an inferior solution suffices to leave the local optimum) appears to be the perfect benchmark for the \MAHH. 

To answer this question, the authors of~\cite{LissovoiOW23} regard also other multimodal problems, however, with non-conclusive results. 

For the most prominent multimodal benchmark $\jump_m$, $m \ge 2$, in~\cite{LissovoiOW23} the bounds $\Omega(2^{\Omega(m)} + n \log n)$ and $O(\frac{(1+\eps)^{m-1} n^{2m-1}}{m^2 m!})$ were shown for the parameter choice $p=\frac{1}{(1+\eps)n}$. We note that the conference version~\cite{LissovoiOW19} states, without proof, a stronger upper bound of $O(\frac{n^{2m-1}}{m})$, but since the journal version only proves a weaker bound, we assume that 
only the bound in the journal version is valid. In any case, these bounds are significantly distant to the known runtime $\Theta(n^m)$ of simple evolutionary algorithms on the jump benchmark, and thus do not allow a conclusive comparison of these algorithms. 

\subsection{Runtimes Analyses on Cliff and Jump Functions}\label{sec:runtime_literature}

We now briefly collect the main runtime results for the \cliff and \jump benchmarks. \cliff functions (with fixed cliff width $d =  n/3$) were introduced in~\cite{JagerskupperS07} to show that the \oclea can profit from its non-elitism. This result was made precise in~\cite{FajardoS21foga}. For the best choice of the population size, a runtime of approximately $O(n^{3.98})$ was shown, however, the result also indicates that small deviations from this optimal parameter choice lead to significant performance losses. 

Comparably simple elitist evolutionary algorithms can leave the local optimum of a general \cliff function, and similarly \jump function, with cliff width $d$ only by flipping the right $d$ bits, hence they have a runtime of $\Theta(n^d)$ when using bit-wise mutation with mutation rate $\frac 1n$ (this follows essentially from the result on \jump functions in the classic work~\cite{DrosteJW02}, but was shown separately for \cliff functions in~\cite{PaixaoHST17}). When using the heavy-tailed mutation operator proposed in~\cite{DoerrLMN17}, again as for \jump functions, the runtime reduces to $n^d d^{-\Theta(d)}$. A combination of mathematical and experimental evidences suggests that the compact genetic algorithm has an exponential runtime on the \cliff function with $d=n/3$~\cite{NeumannSW22}.

The Metropolis algorithm profits at most a little from its ability to leave local optima when optimizing \cliff functions. For constant~$d$, a lower bound of $\Omega(n^{d-0.5} (\log n)^{-d+1.5})$ was shown in~\cite{LissovoiOW23}, for super-constant $d$ it was shown that the runtime is super-polynomial. For a recent tightening and extension of this result, we refer to~\cite{DoerrERW23}.

We note that some artificial immune systems employing an aging operator were shown to optimize \cliff functions in time $O(n \log n)$~\cite{CorusOY20}. Since artificial immune systems are even less understood than hyper-heuristics, it is hard to estimate the general meaning of this result. Hence in summary, we agree with the authors of~\cite{LissovoiOW23} that the performance of the \MAHH on \cliff functions is remarkably good. 

The \jump functions benchmark is by far the most studied multimodal benchmark in the theory of randomized search heuristics. It was proposed already in~\cite{DrosteJW02}, where the runtime of the \oea on this benchmark was shown to be $\Theta(n^m)$ for all values of $m$. Since then is has been intensively studied and given rise to many important results, e.g., it is one of the few examples where crossover was proven to give significant performance gains~\cite{JansenW01,KotzingST11,DangFKKLOSS18,AntipovDK22}, estimation-of-distribution algorithms and ant-colony optimizers were shown to significantly outperform classic evolutionary algorithms on \jump functions~\cite{HasenohrlS18,Doerr21cgajump,BenbakiBD21}, and it led to the development of fast mutation~\cite{DoerrLMN17} and a powerful stagnation-detection mechanism~\cite{RajabiW22}. Several variations of jump functions have been proposed and analyzed~\cite{Jansen15, BamburyBD21, RajabiW21gecco, DoerrZ21aaai, FriedrichKKR22, DoerrQ23tec, DoerrQ23crossover, DoerrQ23LB, Witt23, BianZLQ23}.

In the context of our work, it is important to note that typical elitist mutation-based algorithms optimize \jump functions in time $\Theta(n^m)$. A speed-up by a factor of $\Omega(m^{\Omega(m)})$ can be obtained from fast mutation~\cite{DoerrLMN17} and various forms of stagnation detection~\cite{RajabiW22,RajabiW23,RajabiW21gecco,DoerrR23} (where stagnation detection usually gives runtimes by a factor of around $\sqrt m$ smaller than fast mutation). With crossover, the $(\mu+1)$ GA without additional modifications reaches runtimes of $\tilde O(n^{m-1})$~\cite{DangFKKLOSS18,DoerrEJK23arxiv}. With suitable diversity mechanisms or other additional techniques, runtimes up to $O(n)$ were obtained~\cite{DangFKKLOSS16,FriedrichKKNNS16,WhitleyVHM18,RoweA19,AntipovD20ppsn,AntipovBD21gecco,AntipovDK22}, but the lower the runtimes become, the more these algorithms appear custom-tailored to jump functions, see, e.g.,~\cite{Witt23}. The extreme end is marked by an $O(\frac{n}{\log n})$ time algorithm~\cite{BuzdalovDK16} designed to witness the black-box complexity of jump functions.  

Non-elitism could not be used effectively on \jump functions so far. The \oclea for essentially all reasonable parameter choices was proven to have at least the runtime of the \oplea~\cite{Doerr22}. In~\cite[Theorem~14]{LissovoiOW23}, the Metropolis algorithm for any value of the acceptance parameter was shown to have a runtime of at least $2^{\Omega(n)}$ with at least constant probability.\footnote{The result~\cite[Theorem~14]{LissovoiOW23} says ``with probability $1-2^{-\Omega(n)}$'', but this seems to overlook that the proof from the second sentence on only regards the case that the random initial solution has at least $n/2$ zeroes.}

\section{Preliminaries}\label{sec:preliminaries}

We now formally define the \MAHH algorithm, our considered standard benchmark functions as well as some mathematical tools used in our runtime analysis. 

\subsection{Algorithms}\label{sec:algorithms}

We will analyse the runtime of the \MAHH algorithm applied to the problem of reaching the optima of a benchmark function defined on the space of $n$-dimensional bit vectors.

In each iteration of the algorithm, one bit of the current vector $x$ is chosen at random and flipped to create a mutation $x^{\prime}$. This mutation is accepted with probability $p$ (ALLMOVES operator), and accepted only if the value of the benchmark function increases with probability $1-p$ (ONLYIMPROVING operator). We provide a detailed overview of the Move Acceptance Hyper-Heuristic in Algorithm~\ref{alg:mahh}.

\begin{algorithm}[t]
\caption{Move Acceptance Hyper-Heuristic (\MAHH).}\label{alg:mahh}
\KwData{Choose $x \in \{0,1 \}^n$ uniformly at random}
\While{\text{termination criterion not satisfied}}{
  $x^{\prime} \gets\text{FLIP-RANDOM-BIT}(x)$\;
  $\text{ACC} \gets 
  \begin{cases}
      \text{ALLMOVES}, & \text{with probability } p; \\
      \text{ONLYIMPROVING}, & \text{otherwise};\\
  \end{cases}$\\
  \If{$\text{ACC}(x,x^{\prime})$}{
    $x \gets x^{\prime}$\;
  }
}
\end{algorithm}

We will furthermore compare the runtime of the \MAHH algorithm to the performance of several well-known algorithms. In particular, we will be considering the METROPOLIS Algorithm, in Algorithm \ref{alg:metropolis}, and the (1+1) Evolutionary Algorithm, in Algorithm~\ref{alg:evolalg}.

\begin{algorithm}[t]
\caption{METROPOLIS Algorithm.}\label{alg:metropolis}
\KwData{Choose $x \in \{0,1 \}^n$ uniformly at random}
\While{\text{termination criterion not satisfied}}{
  $x^{\prime} \gets\text{FLIP-RANDOM-BIT}(x)$\;
  $\Delta f \leftarrow{} f(x^{\prime}) - f(x)$\;
  \eIf{$\Delta f \geq 0$}{
    $x \gets x^{\prime}$\;
  }
  {
  $r \gets u \in \left[0,1\right]$ chosen uniformly at random\;
  \If{$r \leq \alpha (n)^{\Delta f}$ }{
    $x \gets x^{\prime}$\;
  }
  }
}
\end{algorithm}

\begin{algorithm}[t]
\caption{(1+1) Evolutionary Algorithm.}\label{alg:evolalg}
\KwData{Choose $x \in \{0,1 \}^n$ uniformly at random}
\While{\text{termination criterion not satisfied}}{
  $x \gets\text{flip each bit of } x \text{ with probability } \frac{1}{n}$\;
  $\Delta f \leftarrow{} f(x^{\prime}) - f(x)$\;
  \If{$\Delta f \geq 0$}{
    $x \gets x^{\prime}$\;
  }
}
\end{algorithm}

\subsection{Benchmark Function Classes}

We will now define the various benchmark functions that we will use to analyse the performances of the algorithm. 

We first define the $\onemax$ function as
$$\onemax(x) = \sum_{i=1}^{n}x_i.$$
The $\onemax$ function has a constant slope leading to a global optimum placed at the $1^n$ bit-string. It is used to evaluate the hillclimbing performance of randomized search heuristics. 

We define the $\cliff_d$ class of functions (for $1 \leq d \leq n/2$) as follows, 
\begin{eqnarray*}
\cliff_d(x) = 
\begin{cases}
      \onemax(x), & \text{if } \lVert x\rVert_1 \leq n-d; \\
      \onemax(x) - d + 1/2, & \text{otherwise.}\\
\end{cases}  
\end{eqnarray*}
The $\cliff_d$ class of functions are examples of functions where evolutionary algorithms only accepting the best proposed moves seen so far (called elitist evolutionary algorithms) will perform poorly. It is therefore used to evaluate the ability of the algorithm to escape local optima. 

In this paper, we will present a runtime analysis of the \MAHH hyperheuristic on the $\jump_m$ class of functions defined as follows,
\begin{eqnarray*}
\jump_m(x) = 
\begin{cases}
      n+m, & \text{if } \lVert x\rVert_1 = n; \\
      m+\onemax(x), & \text{if } \lVert x\rVert_1 \leq n-m; \\
      n-\onemax(x), & \text{otherwise.}\\
\end{cases}  
\end{eqnarray*}
The $\jump_m$ class of functions are examples of functions where the local optimum has a wide basin of attraction, which will make it more difficult for the hyper-heuristics to escape the basin and reach the global optimum.


\subsection{Mathematical Tools}

We now introduce several of the mathematical tools, that we use to prove our lower and upper bound in Sections~\ref{sec:lower_bound}~and~\ref{sec:upper_bound}, respectively.

The expected time to reach a state with $i+1$ one-bits, given a state with $i$ one-bits can be obtained using the following recurrence formula.

\begin{lemma}[\cite{DrosteJW00}] \label{lma1} 
We denote by $T_{i}^{+}$ the expected time to reach a state with $i+1$ one-bits, given a state with $i$ one-bits. We denote by $p_i^{-}$ and $p_i^{+}$ the transition probabilities to  reach states with $i-1$ and $i+1$ one-bits, respectively. Then
$$\mathbb{E}\left[ T_{i}^{+} \right] = \frac{1}{p_i^{+}} + \frac{p_i^{-}}{p_i^{+}} \mathbb{E}\left[ T_ {i-1}^+ \right].$$
\end{lemma}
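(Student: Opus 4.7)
The plan is a one-step (first-step) analysis on the Markov chain that tracks the Hamming weight of the current search point. I would first note that under the one-bit variation operator of Algorithm~\ref{alg:mahh}, combined with an acceptance rule whose output depends on $x$ and $x'$ only through their Hamming weights (true for $\onemax$ and for $\jump$ away from the gap), the sequence of weights is a time-homogeneous Markov chain on $\{0,\dots,n\}$. This is what justifies using the single symbol $\mathbb{E}[T_j^+]$ for the expected time to go from level $j$ to level $j+1$ regardless of the history and the current iteration index.

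Starting from state $i$, I would then condition on the outcome of the first iteration. Three cases are possible: the chain moves to $i+1$ with probability $p_i^+$, moves to $i-1$ with probability $p_i^-$, or stays at $i$ with the remaining probability $1-p_i^+-p_i^-$. By the strong Markov property, if the chain lands at $i-1$ it must first return to level $i$ in expected time $\mathbb{E}[T_{i-1}^+]$ and then climb from $i$ to $i+1$ in expected time $\mathbb{E}[T_i^+]$; in the ``stay'' case the process effectively restarts from $i$. Accounting for the initial step with a $+1$ in each branch yields
\[
\mathbb{E}[T_i^+] \;=\; p_i^+\cdot 1 \;+\; p_i^-\bigl(1+\mathbb{E}[T_{i-1}^+]+\mathbb{E}[T_i^+]\bigr) \;+\; (1-p_i^+-p_i^-)\bigl(1+\mathbb{E}[T_i^+]\bigr).
\]

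The remaining step is routine algebra: the constants sum to $1$, the coefficient of $\mathbb{E}[T_i^+]$ on the right collapses to $1-p_i^+$, and one is left with $p_i^+\,\mathbb{E}[T_i^+] = 1 + p_i^-\,\mathbb{E}[T_{i-1}^+]$, which upon division by $p_i^+$ gives the stated identity. There is no real obstacle; the only subtlety worth flagging is that the rearrangement tacitly assumes $p_i^+>0$ and $\mathbb{E}[T_i^+]<\infty$, both of which are satisfied in all applications of the lemma in this paper (the chain is irreducible on the relevant sublevel set and $p_i^+=\Omega(1/n)$). If $p_i^+=0$ the recurrence is to be read in $\overline{\mathbb{R}}_{\ge 0}$ with $\mathbb{E}[T_i^+]=\infty$.
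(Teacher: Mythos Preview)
Your first-step analysis is correct and is the standard argument for this recurrence. Note, however, that the paper does not actually prove Lemma~\ref{lma1}; it simply states it with a citation to~\cite{DrosteJW00}, so there is no in-paper proof to compare against---your derivation is exactly what one would expect a self-contained proof to look like.
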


Lemma \ref{lma1} allows us to link a step of the algorithm to previous steps while taking the local variations of the benchmark function into account. The formula can be expanded to give an explicit formula for the expected time to increase the number of one-bits by one.

\begin{lemma} \label{lma2} 
Following the notation defined in Lemma \ref{lma1},
$$\mathbb{E}[T_{i}^{+}] = \sum_{k=0}^{i} \frac{1}{p_k^{+}} \prod_{\ell=k+1}^{i} \frac{p_\ell^{-}}{p_\ell^{+}}.$$
\end{lemma}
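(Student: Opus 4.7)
The plan is to prove the closed form by induction on $i$, using Lemma~\ref{lma1} as the recurrence that drives the inductive step. The base case $i=0$ is handled by adopting the usual convention that the empty product equals $1$: the right-hand side of the formula collapses to $\frac{1}{p_0^+}$, which matches what Lemma~\ref{lma1} gives when it is interpreted at $i=0$ (there is no $p_{-1}^-$ term, so only the $\frac{1}{p_0^+}$ survives).

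For the inductive step, I would assume the formula holds at index $i-1$ and substitute the induction hypothesis into Lemma~\ref{lma1}:
\begin{align*}
\mathbb{E}[T_i^+]
&= \frac{1}{p_i^+} + \frac{p_i^-}{p_i^+}\,\mathbb{E}[T_{i-1}^+] \\
&= \frac{1}{p_i^+} + \frac{p_i^-}{p_i^+}\sum_{k=0}^{i-1} \frac{1}{p_k^+}\prod_{\ell=k+1}^{i-1} \frac{p_\ell^-}{p_\ell^+}.
\end{align*}
The key algebraic manipulation is to absorb the prefactor $\frac{p_i^-}{p_i^+}$ into the product, extending its upper index from $i-1$ to $i$, and then to recognize the isolated term $\frac{1}{p_i^+}$ as the $k=i$ summand (again using the empty-product convention). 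This yields
\begin{equation*}
\mathbb{E}[T_i^+] = \sum_{k=0}^{i} \frac{1}{p_k^+}\prod_{\ell=k+1}^{i} \frac{p_\ell^-}{p_\ell^+},
\end{equation*}
which is precisely the claimed expression.

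There is no real obstacle here: the result is a mechanical unrolling of a linear first-order recurrence, and the only subtlety is being explicit about the empty-product convention so that the base case and the $k=i$ term in the inductive step are correctly aligned. I would make that convention explicit at the beginning of the proof to avoid any ambiguity when the formula is later invoked in Sections~\ref{sec:lower_bound} and~\ref{sec:upper_bound}.
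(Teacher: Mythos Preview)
Your proof is correct and follows essentially the same approach as the paper, which simply remarks that the recurrence in Lemma~\ref{lma1} ``can be expanded to give an explicit formula'' without spelling out the details. Your induction argument is precisely the formalization of that expansion, and your care with the empty-product convention is appropriate.
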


Each term of the sum can be interpreted as the expected time of the path starting from the state with $i$ one-bits and returning to the state with $k$ one-bits ($k < i$) before reaching the state with $i+1$ one-bits. 

The following Multiplicative Drift Theorem provides upper bounds on the expected runtime in the case where the progress can be bounded from below by an expression proportional to the distance from the target (which is zero in the formulation of the theorem below). 

\begin{theorem}[Multiplicative Drift Theorem \cite{DoerrJW12algo}]
\label{thm:mult-drift}
Let $S \subseteq \mathbb{R}$ be a finite set of positive numbers with minimum $s_{min}$. Let $(X_t)_{t\geq0}$ be a sequence of random variables over $S \cup \{0\}$. Let $T$ be the random variable that denotes the first point in time $t \in \mathbb{N}$ for which $X_t = 0$. 
Suppose further that there exists a constant $\delta > 0$ such that 
$$\mathbb{E}\left[X_{t} - X_{t+1} \mid  X_t = s \right] \geq \delta s,$$
holds for all $s \in S$ with $\mathbb{P}\left[X_t = s_0\right] > 0$. 
Then, for all $s_0 \in S$ with $\mathbb{P}\left[X_0 = s_0 \right] > 0$,
$$\mathbb{E}\left[T \mid  X_0 = s_0 \right] \leq \frac{1 + \log(s_0/s_{min})}{\delta}.$$
\end{theorem}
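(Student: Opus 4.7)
The plan is to convert the multiplicative drift hypothesis into a geometric decay of $\mathbb{E}[X_t]$, promote this to a tail bound on the hitting time $T$ via Markov's inequality, and then recover $\mathbb{E}[T]$ through the identity $\mathbb{E}[T] = \sum_{t \geq 0} \mathbb{P}[T > t]$. To make the bookkeeping clean, I would first pass to the stopped process $\tilde X_t := X_{\min(t,T)}$, which satisfies $\tilde X_t \geq s_{min}$ on the event $\{t < T\}$ and $\tilde X_t = 0$ on $\{t \geq T\}$.

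Next I would iterate the drift. On $\{\tilde X_t \in S\}$ the hypothesis gives $\mathbb{E}[\tilde X_{t+1} \mid \tilde X_t] \leq (1-\delta)\tilde X_t$, and on $\{\tilde X_t = 0\}$ the stopped process is absorbed at zero, so by the tower property $\mathbb{E}[\tilde X_{t+1}] \leq (1-\delta)\mathbb{E}[\tilde X_t]$, and inductively $\mathbb{E}[\tilde X_t \mid X_0 = s_0] \leq (1-\delta)^t s_0$. Markov's inequality then yields
\[
\mathbb{P}[T > t \mid X_0 = s_0] \;=\; \mathbb{P}[\tilde X_t \geq s_{min}] \;\leq\; \tfrac{s_0}{s_{min}} (1-\delta)^t.
\]

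Finally I would sum this tail. I would pick the threshold $t^\star := \lceil \log(s_0/s_{min}) / \log(1/(1-\delta)) \rceil$, bound the first $t^\star$ terms of $\sum_{t \geq 0} \mathbb{P}[T > t]$ trivially by $1$, and bound the remaining terms by a geometric series that evaluates to at most $1/\delta$. Using the estimate $\log(1/(1-\delta)) \geq \delta$ to replace $t^\star$ by $\log(s_0/s_{min})/\delta + 1$ and adding the two contributions then gives exactly the claimed bound $(1 + \log(s_0/s_{min}))/\delta$.

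The main obstacle is matching the precise constant in the statement: a naive split of the tail sum loses a $\log(1/(1-\delta))$ versus $\delta$ factor, so the inequality $\log(1/(1-\delta)) \geq \delta$ has to be invoked in the right place, and the geometric remainder after the threshold must be shown to contribute at most $1/\delta$ rather than $1/(1-e^{-\delta})$. A cleaner alternative, which I would fall back on if the direct tail integration refuses to produce the stated constant, is to apply an additive drift theorem to the potential $Y_t := \ln(X_t/s_{min})$: the concavity of $\ln$ together with Jensen's inequality yield an additive drift of at least $-\ln(1-\delta) \geq \delta$ for $Y_t$, and since $Y_t \geq 0$ whenever $X_t \in S$, the additive drift theorem gives the bound with no tail integration at all.
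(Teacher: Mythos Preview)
The paper does not prove this theorem at all: Theorem~\ref{thm:mult-drift} is quoted from \cite{DoerrJW12algo} as a black-box tool and is only \emph{applied} (in the proofs of Lemmas~\ref{lm:P} and~\ref{lm:drift4} and the bounds on $T_1$). So there is no ``paper's own proof'' to compare against.

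That said, your primary approach---geometric decay of $\mathbb{E}[\tilde X_t]$ via the drift hypothesis, then Markov's inequality for a tail bound on $T$, then summation---is exactly the argument in the cited source \cite{DoerrJW12algo}, and is correct in outline. You have also correctly identified the one genuinely delicate point: recovering the precise constant $(1+\log(s_0/s_{\min}))/\delta$ rather than, say, $(1+\log(s_0/s_{\min}))/\delta + 1$. Your threshold-plus-geometric-tail split, as written, overshoots by an additive~$1$ (the ceiling in $t^\star$ costs~$+1$, and the geometric tail costs a further~$1/\delta$); the original paper handles this with a slightly more careful bookkeeping of the crossover index.

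Your fallback via additive drift on $Y_t = \ln(X_t/s_{\min})$ is more fragile than you suggest. Two issues: first, $Y_t$ is undefined (or $-\infty$) when $X_t=0$, so you must truncate or redefine the potential at the target, and the standard additive drift statement then needs the process to be nonnegative with target~$0$, whereas $Y_t=0$ corresponds to $X_t=s_{\min}$, not to $X_t=0$. Second, the Jensen step $\mathbb{E}[\ln X_{t+1}\mid X_t]\le \ln\mathbb{E}[X_{t+1}\mid X_t]$ is only valid when $X_{t+1}>0$ almost surely, which is not guaranteed here. The usual remedy is to work with the potential $g(s)=1+\ln(s/s_{\min})$ on $S$ and $g(0)=0$, but then the additive drift of~$\delta$ has to be re-verified directly rather than read off from Jensen. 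Your main route is the sound one; the fallback would need more repair than you indicate.
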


In our analysis in Section~\ref{sec:upper_bound} we furthermore make use of Wald's formula, which we introduce now. 

\begin{theorem}[Wald's Formula \cite{Wald44}]
\label{thm:wald}
If we assume $(X_n)_{n \geq 0}$ to be a sequence of random independent variables following the same law and to be integrable, and $N$ to be a waiting time integrable and adapted to $(X_{n})_{n \geq 0}$, i.e., the events $\{N = n\}$ are determined by $(X_1,...,X_n)$. 
Then, 
$$
\mathbb{E} \left[ \sum_{i=1}^{N} X_i \right] = \mathbb{E}\left[N\right] \mathbb{E}\left[X_1\right].
$$

\end{theorem}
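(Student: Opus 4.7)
The plan is to reduce the random-length sum to a deterministic series of expectations via indicator variables and then exploit the adaptedness of $N$ to decouple each $X_i$ from the event that the sum has not yet stopped. I would begin from the pointwise identity
\[
\sum_{i=1}^{N} X_i = \sum_{i=1}^{\infty} X_i \, \mathbf{1}_{\{i \leq N\}},
\]
and apply Fubini--Tonelli to swap the expectation with the sum, obtaining
\[
\mathbb{E}\!\left[\sum_{i=1}^{N} X_i\right] = \sum_{i=1}^{\infty} \mathbb{E}\!\left[X_i \, \mathbf{1}_{\{i \leq N\}}\right].
\]
A clean way to justify this interchange under the stated integrability hypothesis (without assuming $X_i \geq 0$) is to decompose $X_i = X_i^+ - X_i^-$, apply Tonelli to each nonnegative part, and recombine; the comparison $\sum_i \mathbb{E}[|X_i|\,\mathbf{1}_{\{i\leq N\}}] \leq \mathbb{E}[|X_1|]\,\mathbb{E}[N]$ (which is exactly what the final formula asserts, applied to $|X_i|$) provides the dominating quantity.

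The heart of the argument is then to factor each summand. Since $N$ is adapted to $(X_n)$, the event $\{N \leq i-1\}$ is measurable with respect to $\sigma(X_1,\ldots,X_{i-1})$; its complement $\{i \leq N\}$ therefore lies in the same $\sigma$-algebra and, by the mutual independence of the sequence $(X_n)$, is independent of $X_i$. Consequently,
\[
\mathbb{E}\!\left[X_i \, \mathbf{1}_{\{i \leq N\}}\right] = \mathbb{E}[X_i]\,\mathbb{P}(i \leq N) = \mathbb{E}[X_1]\,\mathbb{P}(i \leq N),
\]
where the last equality uses that the $X_i$ are identically distributed.

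Summing over $i$ and invoking the standard tail identity $\mathbb{E}[N] = \sum_{i=1}^{\infty} \mathbb{P}(N \geq i)$ for a $\mathbb{N}$-valued random variable then yields the claim. The main obstacle is the bookkeeping for signed summands: one must confirm that Fubini--Tonelli is legitimate, which is where the integrability assumption enters in a non-trivial way. The independence step itself is conceptually short but hinges critically on interpreting the adaptedness hypothesis correctly, namely that $\{N = n\}$—and hence $\{N \geq i\}$ for each $i$—is determined by $(X_1,\ldots,X_{i-1})$, which is precisely what allows independence from the ``fresh'' variable $X_i$.
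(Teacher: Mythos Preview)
Your argument is the standard textbook proof of Wald's identity and is correct. Note, however, that the paper does not actually prove this theorem: it is stated in the preliminaries as a classical tool with a citation to Wald's original 1944 paper, and is then invoked as a black box in Sections~\ref{sec:upper_bound} and the appendix. So there is no ``paper's own proof'' to compare against; your write-up simply supplies what the authors left to the reference, and it does so along the usual lines (indicator expansion, Fubini--Tonelli justified by applying the nonnegative case to $|X_i|$, independence of $X_i$ from $\{N\ge i\}\in\sigma(X_1,\ldots,X_{i-1})$, and the tail-sum formula for $\mathbb{E}[N]$).
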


\section{Lower Bound on the Runtime of \MAHH  on $\jump_m$} 
\label{sec:lower_bound}

In this section, we prove a general formula for $\mathbb{E}[T_{n-1}^{+}]$ by using Lemma~\ref{lma2} for $i=n-1$, and use it to compute more precise lower bounds for the runtime of the \MAHH algorithm.

\begin{theorem}[General Formula for the Expected Duration of the Last Step]
\label{thmformula}
We consider the \MAHH algorithm on $\jump_m$.
Let $T_{n-1}^{+}$ be the expected time for the \MAHH algorithm to reach the state with $n$ 1-bits, given a state with $(n-1)$ 1-bits. For any $p > 0,n$ and $m \leq n$, we have 
 \begin{equation} \label{eqn:GeneralFormula}
  \mathbb{E}[T_{n-1}^{+}] = p^{n-2m+1} \sum_{k=0}^{n-m-1} p^{-k} \binom{n}{k} + p^{1-n} \sum_{k = n-m}^{n-1} \binom{n}{k} p^{k}.     
 \end{equation}
\end{theorem}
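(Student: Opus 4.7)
The plan is to apply Lemma~\ref{lma2} with $i = n-1$, which expresses
$$\mathbb{E}[T_{n-1}^+] = \sum_{k=0}^{n-1} \frac{1}{p_k^+} \prod_{\ell=k+1}^{n-1} \frac{p_\ell^-}{p_\ell^+},$$
and to simplify this sum into the claimed closed form by plugging in the explicit transition probabilities of the \MAHH on $\jump_m$.

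First I would write down the one-bit-flip transition probabilities, splitting the state space $\{0,1,\ldots,n-1\}$ into four regimes dictated by the fitness landscape of $\jump_m$: the \emph{ascending slope} $0 \le k \le n-m-1$, where gaining a one-bit is always accepted and losing one is accepted only with probability $p$; the \emph{local optimum} $k = n-m$, where both neighbors have lower fitness so every move is accepted only with probability $p$; the \emph{valley} $n-m+1 \le k \le n-2$, where the roles from the ascending slope are reversed; and the \emph{pre-optimum state} $k = n-1$, where both neighbors have strictly larger fitness and every move is always accepted. Each $p_k^\pm$ takes the form $\tfrac{k\text{ or }n-k}{n} \cdot q$ with $q \in \{1,p\}$ depending on the regime.

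Next, I would evaluate the products $P_k := \prod_{\ell=k+1}^{n-1} p_\ell^-/p_\ell^+$ by tracking how many factors of each regime are encountered as $\ell$ ranges from $k+1$ to $n-1$. The combinatorial factors $\ell/(n-\ell)$ telescope into a ratio of factorials, while the powers of $p$ accumulate additively in the exponent. Using the identity $\frac{n \binom{n-1}{k}}{n-k} = \binom{n}{k}$ when combining $P_k$ with $1/p_k^+$, the $k$th term of the sum in Lemma~\ref{lma2} collapses to $p^{n-2m+1-k}\binom{n}{k}$ when $k \le n-m-1$, and to $p^{k+1-n}\binom{n}{k}$ when $k \ge n-m$. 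Splitting the sum at $k = n-m$ and factoring out $p^{n-2m+1}$ and $p^{1-n}$, respectively, yields exactly~(\ref{eqn:GeneralFormula}).

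The main obstacle will be the bookkeeping around the boundary $k = n-m$: the local optimum contributes an asymmetric factor $(n-m)/m$ instead of the generic $\ell p/(n-\ell)$ or $\ell/((n-\ell)p)$ seen in the other two non-trivial regimes, and this contribution must be verified to interpolate correctly so that the two closed forms for Part~I and Part~II dovetail. The cases $k = n-m-1$, $k = n-m$, and $k = n-1$ deserve a direct sanity check because the ranges contributing to $P_k$ degenerate there. Once these boundary cases are handled, no nontrivial estimation is required and the proof reduces to careful index manipulation.
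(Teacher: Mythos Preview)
Your proposal is correct and follows essentially the same approach as the paper: apply Lemma~\ref{lma2} at $i=n-1$, compute the ratios $p_\ell^-/p_\ell^+$ in the four regimes of the $\jump_m$ landscape, telescope the combinatorial factors into $\binom{n}{k}$, and track the accumulated power of $p$ to obtain $p^{n-2m+1-k}\binom{n}{k}$ for $k\le n-m-1$ and $p^{k+1-n}\binom{n}{k}$ for $k\ge n-m$. The boundary cases you flag ($k=n-m-1$, $k=n-m$, $k=n-1$) are exactly where the paper's case distinction in~\eqref{eqn:PossibleFractionValues} does its work, so your anticipated sanity checks match the paper's computation.
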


\begin{proof}
This result is a direct consequence of Lemma~\ref{lma2} for $i = n-1$,
\begin{equation}\label{eqn:ExpectedTimeLemma}
\mathbb{E}[T_{n-1}^{+}] = \sum_{k=0}^{n-1} \frac{1}{p_k^{+}} \prod_{\ell=k+1}^{n-1} \frac{p_\ell^{-}}{p_\ell^{+}}.
\end{equation}
We now need to compute the $ \frac{p_\ell^{-}}{p_\ell^{+}}$ terms. A straightforward calculation counting the number of bits that we can flip, and adding a factor $p$ in cases where the $\jump_m$ score reduces, gives us

\begin{equation}\label{eqn:PossibleFractionValues}
\frac{p_k^{-}}{p_k^{+}} = \left\{
    \begin{array}{ll}
        \frac{k}{n-k} p, & \mbox{if } 0 \leq k \leq n - m - 1; \\
        \frac{k}{n-k} = \frac{n-m}{m}, & \mbox{if } k = n - m; \\
        \frac{k}{n-k} \frac{1}{p}, & \mbox{if } n - m  + 1 \leq k \leq n - 2; \\
        n-1, & \mbox{if } k = n - 1. \\
    \end{array}
\right.    
\end{equation}

We will now consider different ranges of $k$ case by case since the exact expressions of $p_\ell^{-}$ and $p_\ell^{+}$ depend on $k$.

We begin with the case where $  0 \leq k \leq n - m - 1$. The product terms in \eqnref{eqn:ExpectedTimeLemma} can be split as follows, 
\begin{eqnarray}\label{eqn:ExpandedProducts}
\prod_{\ell=k+1}^{n-1} \frac{p_\ell^{-}}{p_\ell^{+}} &=& \left(  \prod_{\ell=k+1}^{n-m-1}\frac{p_\ell^{-}}{p_\ell^{+}} \right) \frac{p_{n-m}^{-}}{p_{n-m}^{+}} \left(\prod_{\ell=n-m+1}^{n-2}   \frac{p_\ell^{-}}{p_\ell^{+}} \right)  \frac{p_{n-1}^{-}}{p_{n-1}^{+}}.
\end{eqnarray}

Plugging in the formulas for $\frac{p_k^{-}}{p_k^{+}}$ in \eqnref{eqn:PossibleFractionValues} into 
 the product terms in \eqnref{eqn:ExpandedProducts} yields,
\begin{align*}
     \prod\limits_{\ell=k+1}^{n-m-1} \frac{p_\ell^{-}}{p_\ell^{+}} &= p^{n-m-k-1} \prod\limits_{\ell=k+1}^{n-m-1} \frac{\ell}{n-\ell}, \\
      \prod\limits_{\ell=n-m+1}^{n-2} \frac{p_\ell^{-}}{p_\ell^{+}} &= \left( \frac{1}{p} \right)^{m-2} \prod\limits_{\ell=n-m+1}^{n-2} \frac{\ell}{n-\ell}.    
\end{align*}

Therefore, for  $  0 \leq k \leq n - m - 1$ the summands in \eqnref{eqn:ExpectedTimeLemma} are
\begin{align}
    \frac{1}{p_k^+} \prod_{\ell=k+1}^{n-1} \frac{p_\ell^{-}}{p_\ell^{+}} =& \frac{1}{p_k^+}\left(  \prod_{\ell=k+1}^{n-m-1}\frac{p_\ell^{-}}{p_\ell^{+}} \right) \frac{p_{n-m}^{-}}{p_{n-m}^{+}} \left(\prod_{\ell=n-m+1}^{n-2}   \frac{p_\ell^{-}}{p_\ell^{+}} \right)  \frac{p_{n-1}^{-}}{p_{n-1}^{+}} \notag \\
=& \frac{n}{n-k} p^{n-m-k-1}  \left( \frac{1}{p} \right)^{m-2} \prod_{\ell=k+1}^{n-1} \frac{\ell}{n-\ell} \notag\\
=& p^{n-2m-k+1} \frac{n!}{(n-k)!k!} \notag \\
=& \binom{n}{k}p^{n-2m-k+1}.\label{eqn:FormulaAscent}
\end{align}

In the case where $  n - m  \leq k \leq n - 2$, by once again plugging the formulas for $\frac{p_k^{-}}{p_k^{+}}$ in \eqnref{eqn:PossibleFractionValues} into the product terms in \eqnref{eqn:ExpectedTimeLemma} we obtain,
\begin{eqnarray*}
\prod_{\ell=k+1}^{n-1} \frac{p_\ell^{-}}{p_\ell^{+}} &=& \left( \prod_{\ell=k+1}^{n-2} \frac{p_\ell^{-}}{p_\ell^{+}} \right) \frac{p_{n-1}^{-}}{p_{n-1}^{+}} \\ &=& \left( \frac{1}{p} \right)^{n-k-2} \left(\prod_{\ell=k+1}^{n-2} \frac{\ell}{n-\ell} \right) (n-1) \\
&=& \left( \frac{1}{p} \right)^{n-k-2} \frac{(n-1)!}{k!(n-(k+1))!}. 
\end{eqnarray*}

Therefore, for  $  n - m  \leq k \leq n - 2$ the summands in \eqnref{eqn:ExpectedTimeLemma} are
\begin{eqnarray}
\frac{1}{p_k^+} \prod_{\ell=k+1}^{n-1} \frac{p_\ell^{-}}{p_\ell^{+}} &=& \left( \frac{1}{p} \right)^{n-k-1} \frac{n(n-1)!}{k!(n-k)!} \notag\\
&=& \binom{n}{k} \left( \frac{1}{p} \right)^{n-k-1}.\label{eqn:FormulaDescent}    
\end{eqnarray}

Finally, by inserting \eqnref{eqn:FormulaAscent} and \eqnref{eqn:FormulaDescent} into \eqnref{eqn:ExpectedTimeLemma} we obtain,  
$$\mathbb{E}[T_{n-1}^{+}] =  p^{n-2m+1} \sum_{k=0}^{n-m-1} \binom{n}{k} p^{-k}  + p^{1-n} \sum_{k = n-m}^{n-1} \binom{n}{k} p^{k}.$$
\end{proof}

In Theorem \ref{thmformula}, we proved a general formula which will be used to derive a lower bound for the expected time of the \MAHH to arrive at the maximum of $\jump_m$ functions for two different forms of $m$.

\subsection{Case Where $m = o(\sqrt{n})$} 

We first consider the case where $m= o(\sqrt{n})$, which means that the descent is small relative to $n$. 

\begin{theorem}
If $m=o(\sqrt{n})$, then the expected runtime of the \MAHH on $\jump_m$, denoted by $T$, is
$$\mathbb{E} [T] = \Omega  \left( \frac{n^{2m-1}}{(2m-1)!}\right).$$
\end{theorem}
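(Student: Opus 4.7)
The plan is to use the explicit formula from Theorem~\ref{thmformula} for $\mathbb{E}[T_{n-1}^+]$, extract a single $p$-independent summand whose size already matches the desired bound, and then relate $\mathbb{E}[T]$ to $\mathbb{E}[T_{n-1}^+]$ via a standard Markov-chain reduction. For the latter reduction, since each iteration of the \MAHH flips exactly one bit, any trajectory that hits the optimum $1^n$ from a non-optimal initialization must first visit some state of Hamming weight $n-1$. Thus, by the strong Markov property, letting $\tau_{n-1}$ denote the first hitting time of such a state, $\mathbb{E}[T - \tau_{n-1} \mid \tau_{n-1} < T] = \mathbb{E}[T_{n-1}^+]$, so $\mathbb{E}[T] \geq (1-2^{-n})\,\mathbb{E}[T_{n-1}^+]$, and it suffices to lower-bound the right-hand side uniformly in $p$.

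The key observation is that in
$$\mathbb{E}[T_{n-1}^+] = p^{n-2m+1}\sum_{k=0}^{n-m-1} p^{-k}\binom{n}{k} + p^{1-n}\sum_{k=n-m}^{n-1} p^{k}\binom{n}{k},$$
the summand of index $k$ in the first sum equals $p^{n-2m+1-k}\binom{n}{k}$, whose power of $p$ vanishes precisely at $k^{\star} = n-2m+1$. For $m \geq 2$ and $m = o(\sqrt n)$, one readily checks that $0 \le k^{\star} \le n-m-1$, so $k^{\star}$ sits inside the summation range. All remaining summands being nonnegative, I drop them and obtain, uniformly over $p \in (0,1]$,
$$\mathbb{E}[T_{n-1}^+] \geq \binom{n}{n-2m+1} = \binom{n}{2m-1}.$$

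To conclude, I expand $\binom{n}{2m-1} = \tfrac{1}{(2m-1)!}\prod_{i=0}^{2m-2}(n-i)$ and invoke the elementary inequality $\prod(1-x_i) \geq 1 - \sum x_i$ on $\prod_{i=0}^{2m-2}(1-i/n)$, noting that $\sum_{i=0}^{2m-2} i/n = O(m^2/n) = o(1)$ since $m = o(\sqrt{n})$. This yields $\binom{n}{2m-1} \geq (1-o(1))\,n^{2m-1}/(2m-1)!$, and combining with the Markov reduction above gives $\mathbb{E}[T] = \Omega(n^{2m-1}/(2m-1)!)$. The only real subtlety is spotting the index $k^{\star}$ at which the $p$-powers cancel; everything else is a single term-dropping step plus two completely routine asymptotic estimates.
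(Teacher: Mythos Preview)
Your proof is correct and follows essentially the same approach as the paper: both extract the single summand at $k = n-2m+1$ from the formula of Theorem~\ref{thmformula}, observe that it equals $\binom{n}{2m-1}$ independently of~$p$, and then estimate this binomial coefficient asymptotically. You are slightly more explicit than the paper in justifying the reduction from $\mathbb{E}[T]$ to $\mathbb{E}[T_{n-1}^+]$ and in checking that $k^\star$ lies in the summation range, and you use the Weierstrass product inequality where the paper uses a (somewhat imprecisely stated) exponential bound, but these are cosmetic differences.
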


\begin{proof}
By only taking the term $k=n-2m+1$ in \eqnref{eqn:GeneralFormula}, we obtain, 
\begin{eqnarray}
\mathbb{E}\left[T_{n-1}^{+}\right]  & \geq & \binom{n}{n-2m+1}\notag\\
&=& \frac{n(n-1)...(n-2m+2)}{(2m-1)!}\notag\\
&=& \frac{n^{2m-1}(1-\frac{1}{n})...(1-\frac{2m-2}{n})}{(2m-1)!}\notag \\
& \geq &  \frac{n^{2m-1}(1-\frac{2m-2}{n})^{2m-1}}{(2m-1)!}\notag\\ 
&\geq& \frac{n^{2m-1}}{(2m-1)!}e^{-\frac{(2m-1)(2m-2)}{n}}\label{eqn:ExponentialBound}\\
&\underset{m = o(\sqrt{n})}{\sim} &\frac{n^{2m-1}}{(2m-1)!},\notag 
\end{eqnarray}
where we make use of the fact that $(1-x)^y \leq e^{-xy}$ in \eqnref{eqn:ExponentialBound}.
\end{proof}


\subsection{Case Where $m = \alpha n$ With $ \alpha < 0.5$}

We now consider the case where the descent has a length linear in~$n$. We shall prove that irrespective of the choice of the parameter~$p$, the runtime of the hyper-heuristics is exponential in this case.

\begin{theorem}
    The expected runtime of the \MAHH on $\jump_m$, denoted by $T$, in the case where $m$ is linear in $n$, i.e., $m = \alpha n$ with $ \alpha < 0.5$, is such that there exists $\beta > 1$ for which we have the exponential lower bound,
    $$\mathbb{E}[T] = \Omega \left(\beta^n\right).$$
\end{theorem}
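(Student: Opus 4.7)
The plan is to apply Theorem~\ref{thmformula} and extract a single, carefully chosen summand from the lower-bound formula whose dependence on~$p$ is harmless and whose combinatorial factor is exponentially large when $m$ is linear in~$n$.

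First, as in the $m=o(\sqrt n)$ case, observe that to reach $1^n$ via one-bit flips the algorithm must first visit a state with $n-1$ one-bits, so by the strong Markov property $\mathbb{E}[T] \geq \mathbb{E}[T_{n-1}^+]$; it therefore suffices to lower bound $\mathbb{E}[T_{n-1}^+]$. The case $p=0$ is trivial (the algorithm reduces to a pure hill-climber and cannot leave the local optimum, giving $\mathbb{E}[T]=\infty$), so we may assume $p\in(0,1]$.

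Next, in the formula~\eqnref{eqn:GeneralFormula} I would retain only the single summand from the first sum at index $k = n-m-1$. A direct simplification gives
\begin{equation*}
\mathbb{E}[T_{n-1}^{+}] \ \geq \ p^{\,n-2m+1}\cdot p^{-(n-m-1)}\binom{n}{n-m-1} \ = \ p^{\,2-m}\binom{n}{m+1}.
\end{equation*}
Since $m\geq 2$ and $p\in(0,1]$, the exponent $2-m$ is non-positive, so $p^{\,2-m}\geq 1$. Consequently $\mathbb{E}[T_{n-1}^{+}] \geq \binom{n}{m+1}$ \emph{uniformly in the choice of the mixing parameter~$p$}, which is the key point of the proof.

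Finally, I would substitute $m=\alpha n$ and invoke the standard entropy bound for binomial coefficients, namely $\binom{n}{\alpha n}\geq 2^{H(\alpha)\,n}/(n+1)$ with $H(\alpha)=-\alpha\log_2\alpha-(1-\alpha)\log_2(1-\alpha)$. Because $0<\alpha<1/2$ we have $H(\alpha)>0$, and the polynomial denominator can be absorbed into a slightly smaller exponential base. Taking any $\beta$ with $1<\beta<2^{H(\alpha)}$ therefore yields $\binom{n}{\alpha n + 1}=\Omega(\beta^n)$, and hence $\mathbb{E}[T]=\Omega(\beta^n)$, as claimed.

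The substantive point—rather than a real computational obstacle—is identifying the \emph{correct} summand to retain from \eqnref{eqn:GeneralFormula}: one needs an index $k$ whose $p$-exponent is $\leq 0$ so that an adversarial choice of~$p$ cannot shrink the term, and whose binomial factor is nevertheless exponentially large. The index $k=n-m-1$ sits exactly at the boundary between the two regimes of~\eqnref{eqn:GeneralFormula} and satisfies both requirements simultaneously; contrast this with the choice $k=n-2m+1$ used in the $m=o(\sqrt n)$ case, which gives a $p^0$ factor but a polynomially-sized binomial coefficient that is too weak once $m$ becomes linear in~$n$. Once the right term is selected, only a routine Stirling/entropy estimate remains.
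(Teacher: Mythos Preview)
Your proof is correct and follows essentially the same strategy as the paper's: extract one summand from~\eqnref{eqn:GeneralFormula} whose $p$-exponent is non-positive and whose binomial coefficient is exponentially large, then apply a Stirling/entropy estimate. The paper chooses the adjacent index $k=n-m$ (the first term of the \emph{second} sum), obtaining $\binom{n}{m}p^{1-m}\geq\binom{n}{m}$, which is asymptotically equivalent to your $\binom{n}{m+1}$; one small inaccuracy in your closing commentary is that the index $k=n-2m+1$ would in fact \emph{also} work in the linear regime, since $\binom{n}{2m-1}=\binom{n}{2\alpha n-1}$ is exponentially large (not polynomial) for any $0<\alpha<1/2$.
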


\begin{proof}

By only taking the term $k=n-m$ in \eqnref{eqn:GeneralFormula}, we obtain
\begin{eqnarray*}
\mathbb{E}[T_{n-1}^{+}] &\geq& \binom{n}{n-m}p^{1-m} \\
&\geq& \binom{n}{n-m}.
\end{eqnarray*}
Stirling's formula gives 
\begin{eqnarray*}
n! &\sim& \left( \frac{n}{e} \right)^{n} \sqrt{2 \pi n}, \\
((1-\alpha)n)! &\sim& \left( \frac{(1-\alpha)n}{e} \right)^{(1-\alpha)n} \sqrt{2 \pi (1 - \alpha)n}, \\
(\alpha n)! &\sim& \left( \frac{\alpha n}{e} \right)^{\alpha n} \sqrt{2 \pi \alpha n}.
\end{eqnarray*}
Therefore, 
\begin{eqnarray*}
((1-\alpha)n)!(\alpha n)! &\sim& \alpha^{\alpha n} (1 - \alpha)^{(1 - \alpha)n} \left( \frac{n}{e} \right)^{n} 2 \pi n \sqrt{(1 - \alpha)\alpha}, \\
\frac{n!}{((1-\alpha)n)!(\alpha n )!} &\sim& C \frac{\kappa^{n}}{\sqrt{n}},
\end{eqnarray*} 
 with $\kappa = \frac{1}{ \alpha^{\alpha} (1 - \alpha)^{1 - \alpha}} > 1$.
 
Hence, for $1<\beta<\kappa$, we have
$$\mathbb{E}[T] = \Omega \left(\beta^n\right). $$
\end{proof}

\section{Upper Bound on the Runtime of \MAHH on $\jump_m$}
\label{sec:upper_bound}

We will now derive an upper bound on the runtime of the \MAHH on $\jump_m$ in the case $p =\frac{m}{n}$ to show that our lower bound in Section \ref{sec:lower_bound} is in fact optimal for this parameter choice. 
\begin{theorem}\label{thm:UpperBound}
If $p=\frac{m}{n}$, the expected runtime $T$ of the \MAHH on $\jump_m$ is 
$$\mathbb{E}\left[T\right] = O\left(n \log n + \frac{n^{2m-1}}{m!m^{m-2}}\right).$$
\end{theorem}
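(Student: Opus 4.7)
The plan is to split the runtime into a hill-climbing phase up to the local optimum $\|x\|_1 = n-m$, and an escape phase up to $\ones$, and to bound these by $O(n\log n)$ and $O(n^{2m-1}/(m!\,m^{m-2}))$ respectively. For the hill-climbing phase I would apply the Multiplicative Drift Theorem (Theorem~\ref{thm:mult-drift}) to the potential $X_t = \max\{0,\, n-m-\|x_t\|_1\}$. With $k := \|x_t\|_1 < n-m$, $s := n-m-k$, and $p = m/n$, the transition probabilities $p_k^+ = (n-k)/n$ and $p_k^- = km/n^2$ give
\begin{equation*}
\mathbb{E}[X_t - X_{t+1} \mid X_t = s] = \frac{s(n+m)+m^2}{n^2} \geq \frac{s}{n},
\end{equation*}
so $\delta = 1/n$ and Theorem~\ref{thm:mult-drift} yields an expected Phase-1 length of $n(1+\log n) = O(n \log n)$.

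For Phase~2, the expected time from $\|x\|_1 = n-m$ to $\ones$ is bounded by $\sum_{i=n-m}^{n-1}\mathbb{E}[T_i^+]$, and I would show this sum is dominated by $\mathbb{E}[T_{n-1}^+]$. Factoring out $\prod_{\ell=i+1}^{n-1}(p_\ell^-/p_\ell^+)$ from the first $i+1$ summands in the expansion of $\mathbb{E}[T_{n-1}^+]$ given by Lemma~\ref{lma2}, and dropping the remaining positive summands, yields $\mathbb{E}[T_{n-1}^+] \geq \bigl(\prod_{\ell=i+1}^{n-1} p_\ell^-/p_\ell^+\bigr)\,\mathbb{E}[T_i^+]$. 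With $p = m/n$, the factor $p_{n-1}^-/p_{n-1}^+ = n-1$ is always present, while each intermediate factor $\ell n/((n-\ell)m)$ for $\ell \in [n-m,\, n-2]$ is at least $1$ provided $m \leq n/2$. Hence $\mathbb{E}[T_i^+] \leq \mathbb{E}[T_{n-1}^+]/(n-1)$ for every $i \leq n-2$, and the sum over the cliff region is $O(\mathbb{E}[T_{n-1}^+])$.

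To bound $\mathbb{E}[T_{n-1}^+]$ I would evaluate formula~\eqref{eqn:GeneralFormula} at $p = m/n$. The substitution $j = n-k$ rewrites the second sum as $\sum_{j=1}^{m}\binom{n}{j}(n/m)^{j-1}$; the ratio $\frac{n-j}{j+1}\cdot\frac{n}{m} \geq 2$ of consecutive terms (for $m \leq n/2$) shows this sum is of the same order as its last term $\binom{n}{m}(n/m)^{m-1} \leq n^{2m-1}/(m!\,m^{m-1})$. For the first sum, the binomial-theorem estimate $\sum_{k=0}^{n-m-1}\binom{n}{k}(n/m)^k \leq (1+n/m)^n$ gives $(m/n)^{n-2m+1}(1+n/m)^n \leq e^m n^{2m-1}/m^{2m-1}$, which by Stirling's formula (using $m! \approx (m/e)^m \sqrt{2\pi m}$) also fits into the target order $O(n^{2m-1}/(m!\,m^{m-2}))$. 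Adding the two phases yields the claimed bound.

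The main obstacle I anticipate is the first sum in~\eqref{eqn:GeneralFormula}: the binomial-theorem upper bound is quite crude (it is a full-range binomial sum rather than a partial one), and one must verify via Stirling's formula that the resulting expression is still absorbed into $O(n^{2m-1}/(m!\,m^{m-2}))$. A secondary point is handling the rare event that the initial state already satisfies $\|x_0\|_1 > n-m$ (possible only when $m$ is close to $n/2$); there the drift of $\|x_t\|_1$ is strongly downward, so the chain returns to the plateau in $O(n)$ extra steps, which does not spoil the overall bound.
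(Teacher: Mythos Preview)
Your argument is correct, and the two worries you flag both resolve: the crude binomial-theorem bound on the first sum of~\eqref{eqn:GeneralFormula} gives at most $e^m n^{2m-1}/m^{2m-1}$, and since Stirling yields $e^m m! = O(m^{m+1})$ this is indeed $O(n^{2m-1}/(m!\,m^{m-2}))$; and the $\|x_0\|_1>n-m$ case is handled by the strong downward drift in the valley, just as you say.

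Your route, however, differs substantially from the paper's for Phase~2. The paper does \emph{not} evaluate~\eqref{eqn:GeneralFormula}; instead it runs a phase argument: it defines a phase as an excursion from the local optimum back to $\{x^{\text{loc}},x^*\}$, shows via multiplicative drift on $|n-m-\|x\|_1|$ that a phase has expected length $O(m)$ (Lemma~\ref{lm:P}), lower-bounds the probability that a phase ends in $x^*$ by the probability $\frac{m!}{n^m}p^{m-1}$ of the direct left-to-right traversal of the valley (Lemma~\ref{lm:N}), and combines these via Wald's formula. Your approach is more elementary and pleasantly self-contained: you reuse the exact birth--death identity $\sum_{i=n-m}^{n-1}\mathbb{E}[T_i^+]$ and the formula of Theorem~\ref{thmformula} that was already established for the lower bound, avoiding Wald entirely. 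The paper's phase decomposition, on the other hand, is what makes the later extension to bit-wise mutation (Theorem~\ref{thm:combination}) possible, since the birth--death structure and hence Lemma~\ref{lma2} and~\eqref{eqn:GeneralFormula} are lost once multi-bit flips are allowed. Both arguments tacitly assume $m$ is bounded away from $n$ (you use $m\le n/2$; the paper's drift inequality in the valley needs $pm\le n-m$, i.e.\ roughly $m\le 0.618\,n$).
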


The remainder of this section contains the proof of Theorem \ref{thm:UpperBound}. In order to prove this result, we split the expected waiting time $T$ in two waiting times,
$$T = T_1 + T_2,$$
where $T_1$ is the time to reach the local maximum from the $0^n$ bit-string and $T_2$ is the time to reach the $1^n$ bit-string from the local maximum. 

We define the distance $d$ to the local optimum, for all $ x \in \{0,1\}^n $, by
$$d(x) = \vert n-m- \Vert x \Vert_1\vert.$$


\begin{lemma}
\label{lemma:drift}
If we denote by $(X^{(t)})_{t\geq0}$ the sequence of states of a run of the \MAHH algorithm on $\jump_m$, then while $n-m- \Vert x \Vert_1 \geq 0$, i.e., while we are on the left of the local optimum,
$$\mathbb{E}\left[d\left(X^{(t)}\right)-d\left(X^{(t+1)}\right) \,\middle|\, d\left(X^{(t)}\right)\right] \geq \frac{d\left(X^{(t)}\right)}{n}. $$
\end{lemma}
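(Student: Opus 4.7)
The plan is to compute the one-step drift of $d$ directly by a short case distinction on which bit is flipped. Write $k := \Vert X^{(t)} \Vert_1$, so that $d(X^{(t)}) = n - m - k$ in the considered region. First, I would observe that whenever $\Vert x \Vert_1 \leq n - m$, the definition of $\jump_m$ gives $\jump_m(x) = m + \onemax(x)$; hence flipping a 0-bit strictly increases the fitness while flipping a 1-bit strictly decreases it. Under the \MAHH acceptance rule, strict improvements are accepted with probability $1$, while strict worsenings are accepted only through the ALLMOVES branch, i.e., with probability $p = m/n$.

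It then follows that, in one step, $d$ decreases by $1$ with probability $(n-k)/n$ (a 0-bit is flipped and accepted), increases by $1$ with probability $(k/n)\cdot(m/n)$ (a 1-bit is flipped and the worsening is accepted), and otherwise stays unchanged. The expected decrease of $d$ is therefore
$$\mathbb{E}\left[d(X^{(t)}) - d(X^{(t+1)}) \mid X^{(t)}\right] = \frac{n-k}{n} - \frac{k m}{n^{2}}.$$
Verifying that this is at least $d(X^{(t)})/n = (n - m - k)/n$ reduces, after clearing denominators, to $m(1 - k/n) \geq 0$, which holds for all $k \leq n$.

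The only slightly delicate point is the boundary case $k = n - m - 1$: flipping a 0-bit lands at the local optimum with $\Vert x' \Vert_1 = n - m$, and one should check that this still counts as a strict improvement. It does, since $\jump_m(x') = m + (n - m) = n > m + k = \jump_m(x)$, so the accounting above is unaffected. No substantial obstacle arises; the lemma is essentially bookkeeping the drift, and the only real risk is mishandling the acceptance probability of a worsening move (it is $p$, not $1-p$).
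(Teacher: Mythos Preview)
Your proposal is correct and follows essentially the same approach as the paper: both compute the one-step drift by splitting into the events ``a 0-bit is flipped'' (always accepted, $d$ decreases by~1) and ``a 1-bit is flipped'' (accepted only with probability~$p$, $d$ increases by~1), and then verify the resulting inequality. The paper writes out $\mathbb{E}[d(X^{(t+1)})\mid d(X^{(t)})]$ and bounds it by $d(X^{(t)})(1-1/n)$, while you directly compute the expected decrease; the algebra is the same. One minor remark: the paper's proof uses only the inequality $p\le m/n$ (invoked at the step where $(p+1)$ is replaced by $(m/n+1)$), so the lemma holds for this whole range of~$p$, not just for the equality $p=m/n$ that you assume.
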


\begin{proof}
We compute
\begin{align}
\mathbb{E}&\left[d\left(X^{(t+1)}\right)\,\middle|\, d\left(X^{(t)}\right)\right] \notag \\
&= \left( d\left(X^{(t)}\right) - 1 \right) \frac{n - \Vert x \Vert_1}{n} \notag\\
&\quad + \left( d\left(X^{(t)}\right) + 1 \right) p \frac{\Vert x \Vert_1}{n}\notag \\
&\quad + d\left(X^{(t)}\right) (1-p) \frac{\Vert x \Vert_1}{n}\notag\\
&= d\left(X^{(t)}\right)+ p \frac{\Vert x \Vert_1}{n} - \frac{n - \Vert x \Vert_1}{n} \notag\\
&= d\left(X^{(t)}\right)+ (p+1) \frac{\Vert x \Vert_1}{n} - 1\notag\\
&= d\left(X^{(t)}\right)+ (p+1) \frac{n-m-d\left(X^{(t)}\right)}{n} - 1\notag\\
&\leq d\left(X^{(t)}\right)+ \left(\frac{m}{n}+1\right) \frac{n-m-d\left(X^{(t)}\right)}{n} - 1 \label{eqn:RequiresPBound}\\
&\leq d\left(X^{(t)}\right) - \frac{d\left(X^{(t)}\right)}{n} +\frac{n^2-m^2}{n^2}-1\notag\\
&\leq d\left(X^{(t)}\right) \left(1-\frac{1}{n}\right),\notag
\end{align}
where we use the fact that  $p \leq \frac{m}{n}$ in \eqnref{eqn:RequiresPBound}. Therefore we obtain
$$\mathbb{E}\left[d\left(X^{(t)}\right)-d\left(X^{(t+1)}\right) \,\middle|\, d\left(X^{(t)}\right)\right] \geq \frac{d\left(X^{(t)}\right)}{n}. $$
\end{proof}

\begin{lemma} \label{lemma:drift2}
If we denote by $(X^{(t)})_{t\geq0}$ the sequence of states of a run of the \MAHH algorithm on $\jump_m$, then while $n-m- \Vert x \Vert_1 \leq 0$, i.e., while we are on the right of the local optimum,
$$\mathbb{E}\left[d\left(X^{(t)}\right)-d\left(X^{(t+1)}\right) \,\middle|\, d\left(X^{(t)}\right)\right] \geq \frac{d\left(X^{(t)}\right)}{n}. $$
\end{lemma}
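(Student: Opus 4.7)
The plan is to mirror the proof of Lemma~\ref{lemma:drift}, with the roles of ``improving'' and ``worsening'' one-bit mutations exchanged. On the right of the local optimum, i.e., in states with $n - m \leq \Vert x \Vert_1 \leq n - 1$, the fitness is $\jump_m(x) = n - \Vert x \Vert_1$, which \emph{decreases} with the number of 1-bits. Consequently, a flip of a 1-bit is a strict improvement and is accepted with probability $1$ by both the ALLMOVES and ONLYIMPROVING operators, whereas a flip of a 0-bit strictly worsens the fitness (except in the single boundary case $\Vert x \Vert_1 = n - 1$, where it lands directly on the global optimum) and is accepted with probability at most $1$.

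Write $k := \Vert x \Vert_1$ and $d := d(X^{(t)}) = k - (n - m)$. From state $x$ there are exactly two kinds of one-bit mutations: a 1-to-0 flip (probability $k/n$) moves to a state with distance $d - 1$ and is always accepted; a 0-to-1 flip (probability $(n-k)/n$) moves to a state with distance $d + 1$ and is accepted with probability at most $1$. To bound the drift from below, I conservatively replace the true acceptance probability of the 0-to-1 flip by its worst-case value $1$; this can only inflate $\mathbb{E}[d(X^{(t+1)})]$, so any drift lower bound proved under this relaxation transfers to the actual process. A direct computation then yields
\[
\mathbb{E}[d(X^{(t+1)}) \mid d(X^{(t)})] \leq (d - 1)\frac{k}{n} + (d + 1)\frac{n - k}{n} = d + \frac{n - 2k}{n}.
\]
Substituting $k = n - m + d$ gives $n - 2k = 2m - n - 2d$, and hence
\[
\mathbb{E}[d(X^{(t+1)}) \mid d(X^{(t)})] \leq d\left(1 - \frac{2}{n}\right) + \frac{2m - n}{n} \leq d\left(1 - \frac{1}{n}\right),
\]
where the last inequality uses $d \geq 2m - n$, which holds for every $d \geq 0$ under the standing assumption $m \leq n/2$ built into the $\jump_m$ definition. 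Rearranging produces the claimed bound $\mathbb{E}[d(X^{(t)}) - d(X^{(t+1)}) \mid d(X^{(t)})] \geq d(X^{(t)})/n$.

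The main (and only) subtle step is justifying that replacing the acceptance probability of the 0-to-1 flip by $1$ is a genuine relaxation of $\mathbb{E}[d(X^{(t+1)})]$ in the upward direction. In every strict-right state with $\Vert x \Vert_1 < n - 1$ the actual acceptance probability is $p \leq 1$, so the relaxation is a legitimate upper bound; and in the boundary state $\Vert x \Vert_1 = n - 1$ the 0-to-1 flip already reaches the global optimum and is therefore accepted with probability $1$, making the relaxation tight rather than loose. No further case analysis is needed, and the drift estimate of Lemma~\ref{lemma:drift} carries over essentially unchanged to the right side of the local optimum, which completes the proof.
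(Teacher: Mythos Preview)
Your proof is correct and follows essentially the same route as the paper's: a direct one-step computation of $\mathbb{E}[d(X^{(t+1)})\mid d(X^{(t)})]$ by case analysis on whether a $1$-bit or a $0$-bit is flipped, followed by the substitution $\|x\|_1=n-m+d$ and a final bound that relies on $m\le n/2$. The only difference is cosmetic: you upper-bound the acceptance probability of the $0$-to-$1$ flip by~$1$ at the outset (which, as you note, can only inflate $\mathbb{E}[d(X^{(t+1)})]$ and also neatly absorbs the boundary case $\|x\|_1=n-1$), whereas the paper keeps the factor~$p$ throughout and discards the resulting nonpositive terms $-pd/n$ and $-(n-m)/n+pm/n$ at the end. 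Setting $p=1$ in the paper's intermediate expression recovers yours exactly, so the two arguments are the same computation viewed from slightly different angles.
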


\begin{proof}
We compute
\begin{align}
\mathbb{E}&\left[d\left(X^{(t+1)}\right) \,\middle|\, d\left(X^{(t)}\right)\right]  \nonumber\\
&= \left( d\left(X^{(t)}\right) + 1 \right) p \frac{n - \Vert x \Vert_1}{n}  \nonumber\\
& \quad + \left( d\left(X^{(t)}\right) - 1 \right) \frac{\Vert x \Vert_1}{n} \label{eq:adddrift}\\
& \quad + d\left(X^{(t)}\right) (1-p) \frac{n - \Vert x \Vert_1}{n}\nonumber\\
&= d\left(X^{(t)}\right)+ p \frac{n-\Vert x \Vert_1}{n} - \frac{ \Vert x \Vert_1}{n} \nonumber\\
&= d\left(X^{(t)}\right)+ p \frac{-d\left(X^{(t)}\right)+m}{n} - \frac{d\left(X^{(t)}\right)+n-m}{n} \nonumber\\
&= d\left(X^{(t)}\right)\left(1-\frac{1}{n}\right) - p \frac{d\left(X^{(t)}\right)}{n}-\frac{n-m}{n} + \frac{pm}{n}\nonumber\\
&\leq d\left(X^{(t)}\right)\left(1-\frac{1}{n}\right) -\frac{n-m}{n} + \frac{pm}{n}\nonumber\\
&\leq d\left(X^{(t)}\right) \left(1-\frac{1}{n}\right).\nonumber
\end{align}
\end{proof}

We can now bound $T_1$. Indeed, we have, 
$$T_1 = \min \{ t :  d(X^{(t+1)}) = 0 \}.$$
By the multiplicative drift theorem (Theorem \ref{thm:mult-drift}) applied with $s_0 = d(X^{(0)}) = n-m$, $s_{\min}=1$ and $\delta = \frac{1}{n}$, we obtain
$$\mathbb{E}\left[T_1\right] = O(n \log n).$$

To bound $T_2$ we define phases of random times $(P_i)_{i \geq 0}$ when $X$ returns to $\{x^{\text{loc}},x^*\}$, where $x^{\text{loc}}$ denotes the local optimum and $x^*$ denotes the global optimum,
\begin{align*}
        P_0 &= T_1, \\
        P_{i+1} &= \min\{k \text{ : } k \geq P_{i}+1 \text{ and } X_{k} \in \{x^{\text{loc}},x^*\} \}.    
\end{align*}

Therefore, we have
$$T_2 = \sum_{i = 1}^{N} P_{i}-P_{i-1}, $$ 
where $N$ is the random number of phases until $X$ reaches $x^*$. As the $(X^{(t)})_{t\geq0}$  verifies the Markov property, the $(P_{i+1} - P_{i})_{i \geq 0}$ are independent and following the same law.

We first prove the following lemma,
\begin{lemma}
\label{lm:P}
If $p \leq \frac{m}{n}$, then we have along the running of the \MAHH algorithm on $\jump_m$,
$$\mathbb{E}\left[P_{i+1}-P_i\right] = O(m).$$
\end{lemma}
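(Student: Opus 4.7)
The plan is to condition on whether the first proposed move out of $x^{\text{loc}}$ is accepted, and then apply the Multiplicative Drift Theorem to bound the conditional remainder of the phase.

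First I would observe that every one-bit-flip neighbor of $x^{\text{loc}}$ has strictly smaller $\jump_m$-value than $x^{\text{loc}}$: flipping a $1$-bit produces a search point of value $n-1$, flipping a $0$-bit produces one of value $m-1$, whereas $\jump_m(x^{\text{loc}}) = n$. Consequently every candidate mutation at $x^{\text{loc}}$ is accepted exclusively via the ALLMOVES operator, i.e., with probability exactly $p$. With probability $1-p$ the proposed step is therefore rejected, the chain remains at $x^{\text{loc}}$ at time $P_i+1$, and the phase closes after a single step.

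Second, with probability $p$ the chain moves one Hamming step away from $x^{\text{loc}}$ to distance $d=1$, either to the left ($\|x\|_1=n-m-1$) or to the right ($\|x\|_1=n-m+1$). Because a single bit flip changes $\|x\|_1$ by $\pm 1$, the chain then stays on its initial side of $x^{\text{loc}}$ until it returns to $x^{\text{loc}}$ (or reaches $x^*$ on the right). On that side Lemma~\ref{lemma:drift} or Lemma~\ref{lemma:drift2} supplies a multiplicative drift of rate $\delta=1/n$ toward $d=0$; both lemmas are applicable by hypothesis since $p\leq m/n$. Applying the Multiplicative Drift Theorem (Theorem~\ref{thm:mult-drift}) with $s_0=1$, $s_{\min}=1$, and $\delta=1/n$ bounds the expected time to reach $d=0$ starting from $d=1$ by $n$, so the conditional expected phase length given acceptance is at most $1+n$.

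Combining the two cases yields $\mathbb{E}[P_{i+1}-P_i]\leq (1-p)\cdot 1+p\cdot(1+n)=1+pn\leq 1+m=O(m)$.

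The main (but mild) obstacle is that on the right side the phase can end by hitting $d=m$ (reaching $x^*$) before reaching $d=0$, whereas the drift theorem bounds only the hitting time of $0$. I would handle this by pathwise domination: the stopping time at $\{0,m\}$ is bounded above by the hitting time of $0$ in the chain extended past $x^*$ with its natural dynamics, for which the multiplicative drift still holds at $\|x\|_1=n$ under the assumption $p\leq m/n$ (a brief direct computation confirms a drift of $p\geq\delta m$ at $d=m$), so the drift-theorem estimate transfers to the stopped phase.
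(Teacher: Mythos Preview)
Your proof follows essentially the same route as the paper's: condition on the first proposed step out of $x^{\text{loc}}$ (rejected with probability $1-p$, giving phase length~$1$; accepted with probability~$p$, putting the chain at $d=1$), then invoke the Multiplicative Drift Theorem via Lemma~\ref{lemma:drift} or Lemma~\ref{lemma:drift2} with $s_0=1$, $s_{\min}=1$, $\delta=1/n$ to get an $O(n)$ conditional return time, and combine to $(1-p)\cdot 1 + p\cdot O(n)=O(m)$. The paper's version is slightly less tidy (it upper-bounds each of the left- and right-move probabilities separately by~$p$, so the weights sum to $1+p$) but the content is identical.

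You are in fact more scrupulous than the paper in flagging that on the right side the phase may terminate at $x^*$ rather than at $x^{\text{loc}}$; the paper simply does not mention this. Your pathwise-domination idea is the correct fix, but the specific extension you propose does not quite work under the stated hypothesis $p\le m/n$: under the ``natural dynamics'' at $x^*=1^n$ every neighbour is strictly worse, hence accepted only with probability~$p$, so the drift at $d=m$ equals $p$; for multiplicative drift with $\delta=1/n$ you need this to be at least $\delta m=m/n$, which fails whenever $p<m/n$ strictly. The repair is trivial---extend instead with any rule at $d=m$ whose drift toward $0$ is at least $m/n$ (for instance, deterministically step to $d=m-1$, giving drift $1\ge m/n$); the coupling argument you sketched then goes through unchanged and yields $\mathbb{E}[\min\{T_0,T_m\}]\le n$.
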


\begin{proof}
In order to prove this lemma, we will use the multiplicative drift again. 
We observe that from the local optimum three possible moves exist: 1) the random walk stays at the local optimum, 2) the random walk goes left of the local optimum, or 3) the random walk moves  right of the local optimum.

In case 1), the expected phase length is $1$.

In case 2), we can use Lemma \ref{lemma:drift} with initial position $s_0 = d(X^{(P_i+1)}) = 1$. In this case, the multiplicative drift theorem gives an expected phase length of $O(n)$.

In case 3), we use Lemma \ref{lemma:drift2} for the right part of the local optima with initial position $s_0 = d(X^{(1)}) = 1$. The multiplicative drift theorem gives an expected duration of $O(n)$.

Jointly considering these three cases allows us to establish, 
\begin{align}
\mathbb{E}\left[P_{i+1}-P_i\right] 
&= p\mathbb{E}\left[P_{i+1}-P_i \,\middle|\, \lVert X^{(P_i+1)}\rVert_1<\lVert x^{\text{loc}}\rVert_1\right] \notag\\ 
& \quad + (1 - p)\mathbb{E}\left[P_{i+1}-P_i \mid \lVert X^{(P_i+1)} \rVert_1=\lVert x^{\text{loc}}\Vert_1 \right] \notag \\ 
& \quad + p\mathbb{E}\left[P_{i+1}-P_i \mid  \lVert X^{(P_i+1)}\rVert_1>\lVert x^{\text{loc}}\rVert_1\right] \notag\\
&= pO(n) + (1-p) + pO(n) \notag\\
&= O(m),\label{eqn:Om}
\end{align}
where in \eqnref{eqn:Om} we make use of $p \leq \frac{m}{n}.$
\end{proof}

The path starting at the local optima and going straight to the global optimum has a probability of \[
\prod_{k=1}^{m} \frac{k}{n}p^{m-1} = \frac{m!}{n^m}p^{m-1}.
\]
Therefore, the probability that the phase ends in $x^*$ is 
\begin{align*}
\mathbb{P}\left[X_{P_{i+1}} = x^*\right] \geq \frac{m!}{n^m}p^{m-1}.
\end{align*}

\begin{lemma}
\label{lm:N}
If $ p\geq \frac{m}{n}$, then we have along the running of the \MAHH algorithm on $\jump_m$,
    $$\mathbb{E}\left[N\right] = O\left(\frac{n^{2m-1}}{m!m^{m-1}}\right).$$
\end{lemma}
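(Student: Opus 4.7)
The plan is to use the fact, already established just before the lemma, that each phase independently terminates at $x^*$ with probability at least $q := \frac{m!}{n^m} p^{m-1}$. Because the algorithm is Markovian and each phase starts from $x^{\text{loc}}$ (whenever it does not terminate at $x^*$), the random variable $N$ counting the number of phases needed to reach $x^*$ is stochastically dominated by a geometric random variable with parameter $q$. Hence
\[
\mathbb{E}[N] \;\leq\; \frac{1}{q} \;=\; \frac{n^m}{m!\, p^{m-1}}.
\]

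Next, I would use the hypothesis $p \geq \tfrac{m}{n}$ to turn this into a polynomial bound: since $p^{m-1} \geq (m/n)^{m-1}$, we get $1/p^{m-1} \leq (n/m)^{m-1}$, and substituting yields
\[
\mathbb{E}[N] \;\leq\; \frac{n^m}{m!}\cdot\left(\frac{n}{m}\right)^{m-1} \;=\; \frac{n^{2m-1}}{m!\, m^{m-1}},
\]
which is exactly the claimed $O(n^{2m-1}/(m!\,m^{m-1}))$ bound.

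The only technical point worth being careful about is the independence and identical distribution of the phases underlying the geometric domination. Since the Markov property of \MAHH was already invoked in the definition of the phases (the increments $P_{i+1}-P_i$ are i.i.d.\ copies of a first-return-to-$\{x^{\text{loc}},x^*\}$ time starting from $x^{\text{loc}}$), and since the event ``phase ends in $x^*$'' is determined by the same Markov chain, the probability that any given phase ends in $x^*$ is the same fixed quantity, bounded below by $q$, independently of previous phases. Therefore $N$ is exactly geometric with some success probability $\geq q$, and the bound on $\mathbb{E}[N]$ is immediate; there is no real obstacle beyond citing the already-derived lower bound on $\mathbb{P}[X_{P_{i+1}} = x^*]$ and substituting $p \geq m/n$.
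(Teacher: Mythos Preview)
Your proposal is correct and follows essentially the same argument as the paper: recognize that $N$ is geometric with success probability at least $q = \frac{m!}{n^m}p^{m-1}$, take the reciprocal, and substitute $p \ge m/n$ to obtain $\mathbb{E}[N] \le \frac{n^{2m-1}}{m!\,m^{m-1}}$. Your additional remarks on the Markov property and stochastic domination make the justification slightly more explicit than the paper's version, but the route is the same.
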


\begin{proof}
For each new phase $P_i$, for $i \geq 0$, the algorithm has the same probability of reaching the global maximum during the next phase. Therefore, the random variable $N$ follows a geometric distribution with parameter 
\[
\mathbb{P}\left[X_{P_{i+1}} = x^*\right] \geq \frac{m!}{n^m}p^{m-1}.
\]
Hence, if $p\geq \frac{m}{n}$,
$$\mathbb{E}\left[N\right] = \frac{1}{\mathbb{P}\left[X_{P_{i+1}} = x^*\right]} \leq \frac{n^{2m-1}}{m!m^{m-1}}.$$
\end{proof}

We can now combine the results of Lemmas~\ref{lm:P} and \ref{lm:N} and Wald's formula (Theorem \ref{thm:wald}), to obtain that if $p=\frac{m}{n},$

\begin{eqnarray*}
\mathbb{E}\left[T_2\right] &=& \mathbb{E}\left[N\right]\mathbb{E}\left[P_1-P_0\right]\\
&=&O(m\mathbb{E}\left[N\right])\\
&=& O\left(\frac{n^{2m-1}}{m!m^{m-2}}\right).
\end{eqnarray*}

This proves that the \MAHH heuristic performs much worse than the elitist heuristic which has an expected time of $O(n^m)$.

\section{Using Global Mutation} 

We have just proven in Section \ref{sec:upper_bound} that the \MAHH with $p = \frac{m}{n}$ has a runtime of $O\left( \frac{n^{2m-1}}{m!m^{m-2}}\right)$. For $m = o(\sqrt n)$, this is significantly larger than the $\Theta(n^m)$ runtime of many evolutionary algorithms. Noting this performance gap between the \oea and the \MAHH, we propose to use the \MAHH with bit-wise mutation, the variation operator of the \oea, instead of one-bit flips. As we will see, this reduces the runtime to essentially the runtime of the \oea (apart from a factor of~$m$). Our runtime guarantee is never worse than the one of the classic \MAHH, so in the (extremal) cases where the classic \MAHH is superior to the \oea, this algorithms is as well. We note that it may appear natural that an algorithm equipped with two mechanisms to leave local optima shows such a best-of-two-worlds behavior, but since the global mutation operator opens also many other search trajectories, it is not immediately obvious that a combination of two operators results in the minimum performance of the two individual performances. In fact, in our proofs we will see that the global mutation operator also leads to (small) negative effects. We discuss this in more detail in the sketch of the proof. 

\begin{algorithm}[t]
\caption{\MAHH with Global Mutation} \label{alg:comb}
\KwData{Choose $x \in \{0,1 \}^n$ uniformly at random}
\While{termination criteria not satisfied}{
  $x^{\prime} \gets\text{flip each bit of } x \text{ with probability } \frac{1}{n}$\;
  $\text{ACC} \gets 
  \begin{cases}
      \text{ALLMOVES}, & \text{with probability } p; \\
      \text{ONLYIMPROVING}, & \text{otherwise};\\
  \end{cases}$\\
  \If{$\text{ACC}(x,x^{\prime})$}{
    $x \gets x^{\prime}$\;
  }
}
\end{algorithm}


We show the following runtime guarantee for the \MAHH with global mutation (bit-wise mutation with mutation rate~$\frac 1n$).

\begin{theorem}\label{thm:combination}
If $p=\frac{m}{8 e n}$, the expected runtime $T$ of the \MAHH with global mutation on $\jump_m$ verifies 
$$\mathbb{E}\left[T\right] = O\left(n \log n + m\min\left(en^m, \frac{8^{m-1}(en)^{2m-1}}{m!m^{m-1}}\right)\right).$$
\end{theorem}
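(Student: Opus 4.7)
I would follow the template of Theorem \ref{thm:UpperBound}, writing $T = T_1 + T_2$ where $T_1$ is the expected time to first reach the local optimum $x^{\text{loc}}$ of $\jump_m$ and $T_2$ is the expected time from $x^{\text{loc}}$ to $x^* = 1^n$. Every one-step probability has to be recomputed for bit-wise mutation at rate $\tfrac 1n$, since multi-bit flips open new transitions in both directions. The prefactor $1/(8e)$ in the choice $p = m/(8en)$, rather than the $m/n$ used in Theorem \ref{thm:UpperBound}, is precisely what is needed to keep the ALLMOVES branch from drowning the positive drift of the ONLYIMPROVING branch.

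For the bound $\mathbb{E}[T_1] = O(n\log n)$, the plan is to establish multiplicative drift on a potential function $\varphi$ comparable to $d(x) = |(n-m) - \|x\|_1|$. The positive drift comes from the ONLYIMPROVING branch: for any $x$ with $\|x\|_1 = j \leq n-m$, the probability of flipping exactly one of the $n-j$ zero-bits and leaving every other bit untouched is at least $(n-j)(1/n)(1-1/n)^{n-1} \geq (n-j)/(en) \geq d(x)/(en)$, and such a move is accepted unconditionally. The ALLMOVES branch contributes an adverse expected change of size $O(p) = O(m/(en))$, since bit-wise mutation flips $O(1)$ bits in expectation and such a mutation is accepted with probability $p$. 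The constant $1/(8e)$ is chosen so that the adverse term is strictly dominated by the positive drift, yielding a net multiplicative drift $\delta = \Omega(1/n)$. Theorem \ref{thm:mult-drift} with $s_0 = O(n)$ then gives $\mathbb{E}[T_1] = O(n\log n)$; a symmetric argument (cf.\ Lemma \ref{lemma:drift2}) handles trajectories that slip past $x^{\text{loc}}$ to the right.

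For $T_2$ I reuse the phase decomposition of Section \ref{sec:upper_bound}: a phase begins at $x^{\text{loc}}$ and ends on the next visit to $\{x^{\text{loc}}, x^*\}$. The expected phase length is still $O(m)$, because the search leaves $x^{\text{loc}}$ per step only with probability $O(p)$ and the drift above keeps the expected return time from any neighbour at $O(n)$, so $1 + O(p) \cdot O(n) = O(m)$. The per-phase success probability is now bounded below by the larger of two competing mechanisms: the classic MAHH route, in which $m$ consecutive one-bit improvements through the gap (each flipping exactly one of the remaining zero-bits, contributing probability $\Omega((m-k+1)/(en))$ for the $k$-th step) are each accepted via ALLMOVES with probability $p$, giving at least $\tfrac{m!}{(en)^m}p^{m-1}$ in total; and the \oea-style direct jump, where a single bit-wise mutation flips exactly the $m$ missing zero-bits (probability $(1/n)^m(1-1/n)^{n-m} \geq 1/(en^m)$) and is accepted unconditionally by ONLYIMPROVING. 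The number $N$ of phases until $x^*$ is stochastically dominated by a geometric random variable with parameter equal to the maximum of these two bounds, so after substituting $p = m/(8en)$ one gets $\mathbb{E}[N] \leq \min\bigl(en^m,\, \tfrac{8^{m-1}(en)^{2m-1}}{m!\,m^{m-1}}\bigr)$. Wald's formula (Theorem \ref{thm:wald}) combined with $\mathbb{E}[P_{i+1}-P_i] = O(m)$ yields $\mathbb{E}[T_2] = O\bigl(m\min(en^m,\, \tfrac{8^{m-1}(en)^{2m-1}}{m!\,m^{m-1}})\bigr)$, and adding $\mathbb{E}[T_1]$ gives the theorem.

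The main obstacle, as hinted in the paragraph introducing Algorithm \ref{alg:comb}, is the $T_1$ drift analysis. Global mutation can occasionally produce a multi-bit flip that, when accepted by ALLMOVES, moves the search far from $x^{\text{loc}}$; a naive choice of potential need not exhibit monotone multiplicative drift under such large excursions, because one bad step can undo many good ones. Controlling these rare but large jumps -- by assigning decreasing weights to far-away positions or by truncating $\varphi$ at a suitable threshold -- and calibrating the acceptance probability to $p = m/(8en)$ so that the ALLMOVES contribution sits strictly below the ONLYIMPROVING drift is the delicate technical step. Once this is in place, the rest is a direct adaptation of Lemmas \ref{lemma:drift}--\ref{lm:N} of Section \ref{sec:upper_bound}.
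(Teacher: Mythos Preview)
Your skeleton matches the paper's proof exactly: split $T=T_1+T_2$, obtain $\mathbb{E}[T_1]=O(n\log n)$ by multiplicative drift, decompose $T_2$ into phases, bound the expected phase length by $O(m)$ via $1+O(p)\cdot O(n)$, lower-bound the per-phase success probability by the maximum of the direct $m$-bit jump and the $m$-step \MAHH trajectory, and finish with Wald. The computation of $\mathbb{E}[N]$ and the final assembly are correct.

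The gap is in the right-side drift analysis, which you dismiss with ``a symmetric argument (cf.\ Lemma~\ref{lemma:drift2})''. That lemma relies on one-bit mutation, where from a valley point every accepted ONLYIMPROVING move decreases $d$. With bit-wise mutation this fails: from a valley point $x$ with $\|x\|_1=n-m+k$, flipping $j>k$ one-bits (and nothing else) \emph{increases} $\jump_m$, so it is accepted by ONLYIMPROVING with probability $1-p$, not by ALLMOVES with probability $p$. Such a step lands at $\|x\|_1=n-m-(j-k)$, i.e., on the \emph{left} of the local optimum, and for $k=1$ it occurs with constant probability (roughly $1-2/e$). Hence the adverse contribution on the right is not $O(p)$ as you assume; your last paragraph misdiagnoses the obstacle as ``rare'' ALLMOVES-accepted excursions, when in fact it is \emph{common} ONLYIMPROVING-accepted overshoots, and it arises in the phase-return part of $T_2$ at least as much as in $T_1$.

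The paper's remedy is not to down-weight far positions but to \emph{up}-weight the right side: it takes
\[
d(x)=\begin{cases}
n-m-\|x\|_1,&\|x\|_1\le n-m,\\
\alpha\bigl(\|x\|_1-(n-m)\bigr),&\text{otherwise},
\end{cases}
\]
with $\alpha=4$. On the right this yields a constant additive good drift of order $\alpha/e$ from single one-bit flips, while the expected increase of $d$ caused by overshooting to the left is bounded by a tail sum $\sum_{i\ge \alpha}1/(i-1)!$, which is below $\alpha/(2e)$ once $\alpha=4$. Balancing the remaining ALLMOVES term against the $m$-slack in the left-side drift is what forces $p=m/(8en)=m/(2\alpha en)$. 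With this potential one obtains a uniform multiplicative drift $\Omega(1/n)$ on both sides, and the rest of your argument goes through unchanged.
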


In order to prove this result, we will follow the same general approach via Wald's equation as in Section \ref{sec:upper_bound}. Again, a phase is the time interval starting from the current solution being on the local optimum to the current solution for the first time being again on the local optimum or being the global optimum.

As before, the expected number of phases is the reciprocal of the probability that a phase ends on the global optimum. Since this is the sum of the probabilities that the global optimum is reached with a mutation step going right from the local to the global optimum and the probability that the \MAHH uses the trajectory through the valley of low fitness, we indeed have the best-of-two-worlds effect for the expected number of phases.

The difference to our situation in Section \ref{sec:upper_bound} lies in the way our algorithm returns to the local optimum. Of course, again, the typical phase (of length one) is that the algorithm just stays on the local optimum, either because the offspring equals the parent (this can happen now that we use bit-wise mutation) or because the new solution, which is worse than the parent unless it is the global optimum, is not accepted (which happens with high probability $1-p$). 

The different behavior arises when the current solution is in the fitness valley, that is, when $\|x\|_1$ is strictly between $n-m$ and~$n$. For the classic \MAHH, in this case we have a strong additive drift towards the local optimum, see~\eqref{eq:adddrift} (this strong drift was not fully exploited in the remainder to allow a uniform analysis of the cases that the algorithm is to the left and to the right of the local optimum, this is why this drift does not appear in the statement of Lemma~\ref{lemma:drift2}). This constant drift implies that it takes only constant time to reach the local optimum when in the fitness valley at constant distance to the local optimum.

Now that we use global mutation, it can happen that the algorithm flips several one-bits to zero and thereby goes from the area right of the local optimum to the left of the local optimum. There, a much smaller drift to the local optimum is present, leading to times of order at least $\Omega(n/m)$ to reach the local optimum. We note that this event of moving from the right to the left of the local optimum is not rare, but happens in fact with constant probability (simply, because with constant probability several bits are flipped and here, with constant probability, all these bits are one). Consequently, now the time to regain the local optimum when in constant distance to the right of the local optimum is at least $\Omega(n/m)$, as opposed to $O(1)$ with local mutation. This shows why it is not obvious that a combination of two mechanisms to leave local optima gives a best-of-two-worlds behavior. 

To nevertheless show a best-of-two-worlds phenomenon, we design a suitable potential function. It accounts for distances from the local optimum stemming from the fitness valley by a constant factor $\alpha$ larger than distances on the other side. With a careful analysis, estimating in particular the effect of changing the side, we show that this potential function still shows a multiplicative drift behavior, and this gives us the same $O(m)$ expected length of a phase as in Section~\ref{sec:upper_bound}.

\ifthenelse{\boolean{arxiv}}
{
For reasons of space, the formal proof had to be moved into the appendix of this work.
}
{
For reasons of space, the formal proof had to be omitted in this paper. It can be found in the preprint~\cite{DoerrDLS23arxiv}.
}

\section{Conclusion}\label{sec:conclusion}

In this work, we conducted a precise runtime analysis of the \MAHH on the \jump benchmark, the most prominent multimodal benchmark in the theory of randomized search heuristics. Our main result is that the \MAHH on the \jump  benchmark does not exhibit the extremely positive performance on the \cliff benchmark~\cite{LissovoiOW23}, but instead has a runtime that for many jump sizes, in particular, the more relevant small ones, is drastically above the runtime of many evolutionary algorithms. This could indicate that the \cliff benchmark is a too optimistic model for local optima in heuristic search. 

On the positive side, we propose to use the \MAHH with the global bit-wise mutation operator common in evolutionary algorithms and prove the non-obvious result that this leads to a runtime which is essentially the minimum of the runtimes of the classic \MAHH and the \oea. 

Since we observed this best-of-two-worlds effect so far only on the \jump benchmark, more research in this direction is clearly needed. We note that in general it is little understood how evolutionary algorithms and other randomized search heuristics profit from non-elitism. So, also more research on this broader topic would be highly interesting.
		
\begin{acks}
This work was supported by a public grant as part of the
Investissements d'avenir project, reference ANR-11-LABX-0056-LMH,
LabEx LMH.
\end{acks}

\bibliographystyle{ACM-Reference-Format}
\bibliography{ich_master,alles_ea_master,rest}

\ifthenelse{\boolean{arxiv}}
{
\newpage
\appendix
\section*{Appendix}

This appendix contains material omitted in the conference version~\cite{DoerrDLS23} for reasons of space. 

\subsection*{Detailed Proof of Theorem \ref{thm:combination}}


We split the random waiting time $T$ in two random waiting times,
$$T = T_1 + T_2,$$
where $T_1$ is the expected time to reach the local maximum from the $0$-bit and $T_2$ is the expected time to reach the $1$-bit from the local maximum. 

We will use the same type of proof as in Section \ref{sec:upper_bound}. Again we define a new distance function, for an $\alpha$ that we will determine later,
$$d(x) = 
\begin{cases}
    n - m - \Vert x \Vert_1, &\text{if } \Vert x \Vert_1 \leq n-m; \\
    \alpha(\Vert x \Vert_1 - (n-m)), & \text{otherwise.}
  \end{cases}
$$

We first prove the following lemma. 

\begin{lemma} \label{lemma:LowerBoundOnExpectation}
For $\alpha > 1$ and $p=\frac{m}{2 \alpha en}$, if we denote by $(X^{(t)})_{t\geq0}$ the sequence of states of a run of the \MAHH algorithm on $\jump_m$, then while $n-m- \Vert x \Vert_1 \geq 0$, i.e., while we are on the left of the local optimum,
$$\mathbb{E}\left[d\left(X^{(t)}\right)-d\left(X^{(t+1)}\right)\mid d\left(X^{(t)}\right)\right] \geq \frac{d\left(X^{(t)}\right)}{en}. $$
\end{lemma}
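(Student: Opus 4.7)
The plan is to decompose the expected drift $\mathbb{E}[d(X^{(t)}) - d(X^{(t+1)}) \mid X^{(t)} = x]$ according to the outcome of one step of bit-wise mutation, assuming $d = d(x) \geq 1$ (the case $d = 0$ is not needed by the multiplicative drift theorem). Let $k = \|x\|_1 \leq n-m-1$, and introduce independent random variables $A \sim \Bin(k, 1/n)$ counting 1-bits flipped to 0, and $B \sim \Bin(n-k, 1/n)$ counting 0-bits flipped to 1, so that the candidate offspring has $k' = k + B - A$ ones. Four disjoint cases govern the resulting signed change $\Delta = d(x) - d(x')$ when the offspring is accepted: (i) $k \leq k' \leq n-m$ (always accepted), $\Delta = B - A \geq 0$; (ii) $k' < k$ (accepted with probability $p$), $\Delta = B - A < 0$; (iii) $n-m < k' < n$ (accepted with probability $p$), $\Delta = d - \alpha(B - A - d)$; and (iv) $k' = n$ (always accepted), $\Delta = d$.

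For the positive contribution, I restrict case (i) to the sub-event that exactly one 0-bit flips and no other bit flips. Because $k \leq n-m-1$, this event always lands in case (i) (we stay on the left). Its probability equals $\tfrac{n-k}{n}\bigl(1-\tfrac{1}{n}\bigr)^{n-1} \geq \tfrac{n-k}{en}$, and each occurrence gives $\Delta = +1$. Using $n - k \geq d + m$, case (i) alone yields a contribution of at least $\tfrac{d + m}{en}$ to the drift. The strictly positive contribution of case (iv) I discard for this lower bound.

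For the negative contributions, case (ii) costs at most $p\, \mathbb{E}[(A - B)^+] \leq p\, \mathbb{E}[A] = pk/n \leq p$. Case (iii) splits into a positive part $p\, d\, \mathbb{P}[k' > n-m]$, which I also discard, and a negative part whose absolute value is bounded by $p \alpha\, \mathbb{E}[(B - A - d)^+ \mathbbm{1}_{B - A > d}] \leq p\alpha\, \mathbb{E}[B]$, since $(B - A - d)^+ \leq B$ pointwise. As $\mathbb{E}[B] = (n-k)/n \leq 1$, case (iii) costs at most $p\alpha$.

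Substituting $p = \tfrac{m}{2\alpha e n}$, the total negative mass is at most $p + p\alpha = \tfrac{m}{2\alpha e n} + \tfrac{m}{2en} \leq \tfrac{m}{en}$ (using $\alpha \geq 1$). Combined with the positive $\tfrac{d+m}{en}$ from case (i), this yields $\mathbb{E}[d - d'] \geq \tfrac{d}{en}$, as required. The main subtlety I expect is controlling case (iii): a bit-wise mutation can in principle flip several 0-bits at once and deposit the new search point far to the right of the local optimum, where its $\alpha$-inflated potential is large. The key observation is that the overshoot $k' - (n-m)$ is stochastically dominated by $B$, whose expectation is at most one, so that the small acceptance probability $p$ suffices to absorb the factor $\alpha$.
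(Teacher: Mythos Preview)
Your argument is correct and follows essentially the same route as the paper: both isolate the event of a single zero-bit flip to secure a positive contribution of at least $(d+m)/(en)$, and both bound the total negative mass from accepted worsenings by an $O(\alpha p)$ term---you obtain $p+p\alpha$ by treating left and right overshoots separately via $\mathbb{E}[(A-B)^+]\le \mathbb{E}[A]\le 1$ and $\mathbb{E}[(B-A-d)^+]\le \mathbb{E}[B]\le 1$, while the paper collapses them into a single bound $\mathbb{E}[A(X)]\le p\,\mathbb{E}[N(X)]\le p$ and multiplies by $2\alpha$. Both reductions give the same arithmetic with $p=m/(2\alpha e n)$.

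One small slip worth fixing: in case~(iv) you write $\Delta=d$, but by the stated potential $d(1^n)=\alpha m$, so $\Delta=d-\alpha m$, which is typically negative rather than a discardable positive term. It is harmless for two reasons: the event $\{k'=n\}$ has probability at most $n^{-(d+m)}\le n^{-(m+1)}$, so its contribution $\alpha m\,n^{-(m+1)}$ is negligible against the $d/(en)\ge 1/(en)$ drift; and more naturally, since hitting $1^n$ terminates the run, one may simply declare the potential at $1^n$ to be zero for the purposes of the multiplicative drift argument. The paper's bound $\mathbb{E}[A(X)]\le p\,\mathbb{E}[N(X)]$ makes the same tacit elision.
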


\begin{proof}
We begin by introducing notation for the two events describing the two directions in which our joint heuristic can progress. We use $C_k(X)$ to denote the event in which we start in state $X$ and move $k$ bits closer to the local optimum, while staying left of the local optimum,  and $C(X)$ the event in which we start at state $X$ and get strictly closer to the optimum, while staying on the left of the local optimum. We furthermore use $A^{\text{left}}_k(X)$ for the event in which we start in state $X$ and move $k$ bits away on the left of the local optimum and $A^{\text{left}}$ for the event to drift strictly away from the local optimum on the left. We also use $A^{\text{right}}_k(X)$ for the event in which we start in state $X$ and move $k$ bits away on the right of the local optimum and $A^{\text{right}}$ for the event to drift strictly away from the local optimum on the right. Additionally, we use $A_k\left(X\right)$ to denote the event in which we start in state $X$ and move to a position, which is $k$ bits away from the local optimum, and $A(X)$ to denote the random variable of the number of bits we move away from the local optimum when starting in state $X$. Finally, we define $S(X)$ as the event in which we start in state $X$ and stay in the same state.

\begin{align*}
\mathbb{E}\left[d\left(X^{(t+1)} \right)\right.&\mid\left. d\left(X^{(t)}\right)\right]  \\
=& \sum_{k=1}^{d(X^{(t)})} \left(d\left(X^{(t)}\right)-k\right) \mathbb{P}\left(C_k\left(X^{(t)}\right)\right) \\
&+\sum_{k=1}^{m} \alpha k \mathbb{P}\left(A^{\text{right}}_k\left(X^{(t)}\right)\right) \\ 
&+ \sum_{k=1}^{n-m-d(X^{(t)})} \left(d\left(X^{(t)}\right)+k\right) \mathbb{P}\left(A^{\text{left}}_k\left(X^{(t)}\right)\right) \\ 
&+ d\left(X^{(t)}\right) \mathbb{P}\left(S\left(X^{(t)}\right)\right)\\
\leq& d\left(X^{(t)}\right) -\!\!\!\! \sum_{k=1}^{d(X^{(t)})} \!\!\!\!k \mathbb{P}\left(C_k\left(X^{(t)}\right)\right) + 2\alpha \sum_{k=1}^n k \mathbb{P}\left(A_k\left(X^{(t)}\right)\right) \\
\leq& d\left(X^{(t)}\right) - \mathbb{P}(C(X)) + 2\alpha \mathbb{E}[A(X)].
\end{align*}

We furthermore use $W$ to denote the event in which we flip only one 0-bit to a 1-bit. We have that
\begin{eqnarray*}
\mathbb{P}(C(X)) &\geq& \mathbb{P}(W) \\
&\geq& \frac{1}{n}\left(1-\frac{1}{n}\right)^{n-1}\left(d\left(X^{(t)}\right)+m\right) \\
&\sim& \frac{d\left(X^{(t)}\right)+m}{2en}, 
\end{eqnarray*}
for $n$ large enough.
Moreover, we denote by $N(X)$ the random number of bits flipped, 
\begin{eqnarray*}
\mathbb{E}[A(X)] &\leq& p\mathbb{E}[N(X)]\\
&\leq& p,
\end{eqnarray*}
as the distance away from the local optimum is always inferior to the number of bits flipped.

Therefore, for $p=\frac{m}{2 \alpha en}$, we obtain
$$\mathbb{E}\left[d\left(X^{(t)}\right)-d\left(X^{(t+1)}\right)\mid d\left(X^{(t)}\right)\right] \geq \frac{d\left(X^{(t)}\right)}{en}. $$
\end{proof}

We can now bound $T_1$. Indeed, we have, 
$$T_1 = \min \{ t :  d\left(X^{(t+1)}\right) = 0 \}.$$
By the multiplicative drift theorem (Theorem \ref{thm:mult-drift}) applied with $s_0 = d(X^{(0)}) = n-m$, $s_{min}=1$ and $\delta = \frac{1}{en}$, we obtain,
$$\mathbb{E}\left[T_1\right] = O(n \log n).$$

\begin{lemma}
\label{lemma:drift3}
For $p=\frac{m}{8en}$, there exists $\Delta > 0$ such that if we denote by $(X^{(t)})_{t\geq0}$ the sequence of states of a run of the \MAHH algorithm on $\jump_m$, then while $n-m- \Vert x \Vert_1 \leq 0$, i.e., while we are on the right of the local optimum,
$$\mathbb{E}\left[d\left(X^{(t)}\right)-d\left(X^{(t+1)}\right)\mid d\left(X^{(t)}\right)\right] \geq \frac{d\left(X^{(t)}\right)\Delta}{n}. $$
\end{lemma}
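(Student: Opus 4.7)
The plan is to compute $\mathbb{E}[d(X^{(t+1)}) \mid X^{(t)}]$ directly, grouping mutations by their net effect on the Hamming weight, in analogy with the proof of Lemma~\ref{lemma:LowerBoundOnExpectation} but now on the right side of the local optimum. Write $\|X^{(t)}\|_1 = n - m + j$ with $1 \leq j \leq m-1$, so $d(X^{(t)}) = \alpha j$. Let $\Delta_0$ and $\Delta_1$ denote the number of 0-bits and 1-bits flipped by the bit-wise mutation, and set $s = \Delta_0 - \Delta_1$. The critical observation is that on the right of the valley $\jump_m(x) = n - \|x\|_1$, so \emph{every} move with $s < 0$ strictly improves the fitness (this includes crossings all the way to the left of the local optimum, which produce fitness at least $m$, strictly greater than the right-side fitness $m-j$), hence is always accepted; moves with $s > 0$ that remain on the right worsen the fitness and are accepted only with probability~$p$.

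For the downward drift, the plan is to retain only the contribution of single-bit flips. The event ``flip exactly one of the $\|X^{(t)}\|_1$ one-bits and no other bit'' occurs with probability at least $\|X^{(t)}\|_1/(en) \geq (n-m)/(en) \geq 1/(2e)$ (using $m \leq n/2$) and reduces the potential by exactly~$\alpha$. This already yields a downward drift of at least $\alpha/(2e)$, a constant independent of~$j$.

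For the upward drift there are two sources. The first consists of non-improving moves with $s \geq 1$ staying on the right, whose contribution is bounded by $p \alpha \, \mathbb{E}[\Delta_0] \leq p \alpha m / n$; substituting $p = m/(8en)$ gives $\alpha m^2/(8en^2)$, at most a fraction $m^2/(4n^2) \leq 1/16$ of the downward drift and therefore harmless. The second, more delicate source is ``long'' crossings: a mutation with $s = -k$ for $k > j(1+\alpha)$ lands far to the left of the local optimum, at potential $k - j$, thereby raising the potential by $k - j(1+\alpha)$. Such a mutation requires $\Delta_1 \geq k$, so its probability is at most $\binom{n}{k}(1/n)^k \leq 1/k!$, and the total upward contribution from long crossings is bounded by
\[
\sum_{k > j(1+\alpha)} \frac{k - j(1+\alpha)}{k!},
\]
which decays factorially in $j(1+\alpha)$.

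The main obstacle is ensuring that this crossing tail is dominated by the downward drift for \emph{every} admissible $j$, including the worst case $j=1$. This is exactly the role of the asymmetric scaling factor $\alpha > 1$ in the potential: inflating right-side distances by $\alpha$ means that a crossing of length $k$ only raises the potential by $k - j(1+\alpha)$, which stays nonpositive up to distance $j\alpha$ into the left side and is strictly positive only for $k > j(1+\alpha)$. Choosing $\alpha = 4$ (consistent with $p = m/(8en)$) gives $j(1+\alpha) \geq 5$ for every $j \geq 1$, so the above tail is bounded by $\sum_{k \geq 6}(k-5)/k! < 0.002$, much smaller than the downward drift $\alpha/(2e) \approx 0.74$. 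Combining the three estimates yields $\mathbb{E}[d(X^{(t)}) - d(X^{(t+1)}) \mid X^{(t)}] \geq c$ for an absolute constant $c > 0$. Since $d(X^{(t)}) = \alpha j \leq \alpha n / 2$, this constant lower bound immediately upgrades to the multiplicative bound $\mathbb{E}[d(X^{(t)}) - d(X^{(t+1)}) \mid X^{(t)}] \geq \Delta \, d(X^{(t)})/n$ for $\Delta = 2c/\alpha > 0$, establishing the lemma. No ingredients beyond those of Lemma~\ref{lemma:LowerBoundOnExpectation} are needed; the factorial control of long crossings is the only genuinely new piece of the analysis.
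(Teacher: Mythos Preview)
Your proof is correct and follows essentially the same route as the paper: both decompose the drift into a constant downward term $\alpha/(2e)$ from single one-bit flips, an $O(p\alpha)$ upward term from rightward moves in the valley, and a factorially decaying upward term from ``long crossings'' to the left, with $\alpha=4$ chosen so that the crossing tail is dominated even at $j=1$. Your handling of the crossing term---making the potential change $k-j(1+\alpha)$ explicit and bounding by $\sum_{k\ge 6}(k-5)/k!$---is in fact tidier than the paper's; the only case neither proof addresses explicitly is a move with $s>0$ landing on the global optimum, which is always accepted but has probability at most $1/n^{m-j}$ and hence contributes at most $\alpha(m-j)/n^{m-j}\le \alpha/n$ to the upward drift, leaving the conclusion intact.
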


\begin{proof}
We again make use of the  events and random variables defined in the proof of Lemma \ref{lemma:LowerBoundOnExpectation}. Moreover, use $D(X)$ to denote the event in which we start in state $X$ and move closer to the local optimum, while staying on its right. We furthermore, separate $A(X)$ the drift away from the optimum into two random variables : $A^{\text{left}}(X)$ and $A^{\text{right}}(X)$. 
For the same reason as the case on the left of the local optimum, we have,

\begin{align*}
\mathbb{E}\left[d\left(X^{(t+1)}\right)\right.&\left.\mid d\left(X^{(t)}\right)\right]  \\
\leq& d\left(X^{(t)}\right) - \alpha \mathbb{P}(D(X)) + \mathbb{E}[A(X)] \\
 \leq& d\left(X^{(t)}\right) - \alpha\frac{\Vert x\Vert_1}{n}\left(1-\frac{1}{n}\right)^{n-1}+ \mathbb{E}[A(X)]\\
 \leq& d\left(X^{(t)}\right) - \alpha\frac{n-m}{ne}+ \mathbb{E}\left[A^{\text{right}}(X)\right]+\mathbb{E}\left[A^{\text{left}}(X)\right]\\
 \leq& d\left(X^{(t)}\right) - \frac{\alpha}{2e}+p\alpha+\mathbb{E}\left[A^{\text{left}}(X)\right].
\end{align*}
And if we further denote the event in which we move from state $X^{(t)}$ to $n-m-i$ by $F_{d\left(X^{(t)}\right)-i}$, as well as the event in which we flip $i$ 1-bits to 0s by $F$,
\begin{eqnarray*}
\mathbb{E}[A^{\text{left}}(X)] &\leq& \sum_{i=d\left(X^{(t)}\right)+1}^{+\infty}i\mathbb{P}\left(F_{d\left(X^{(t)}\right)-i}\right) \\
&\leq& \sum_{i=d\left(X^{(t)}\right)+1}^{+\infty}i\mathbb{P}\left(F\right) \\
&\leq& \sum_{i=d\left(X^{(t)}\right)+1}^{+\infty}i\frac{1}{n^i}\left(1-\frac{1}{n}\right)^{n-i}\binom{\Vert x\Vert_1}{i}\\
&\leq& \sum_{i=\alpha}^{+\infty}\frac{1}{(i-1)!}.
\end{eqnarray*}

This is the rest of a converging sum, and therefore tends towards 0 when $\alpha \to \infty$. Therefore, for $\alpha$ big enough, $\mathbb{E}\left[A^{\text{left}}(X)\right]\leq\frac{\alpha}{2e}.$ In this case $\alpha=4$ works. This choice of $\alpha$ results in the choice $p=\frac{m}{8en}$. Furthermore, $\Delta = \frac{\alpha}{2e} -\mathbb{E}\left[A^{\text{left}}(X)\right] =  \frac{2}{e} -\mathbb{E}\left[A^{\text{left}}(X)\right]. $

\begin{eqnarray*}
\mathbb{E}\left[d\left(X^{(t+1)}\right)\mid d\left(X^{(t)}\right)\right]  &\leq& d\left(X^{(t)}\right) - \frac{\alpha}{2e}+p\alpha+\mathbb{E}\left[A^{\text{left}}(X)\right] \\
&\leq& d\left(X^{(t)}\right) - \Delta \\
&\leq& d\left(X^{(t)}\right)\left(1 - \frac{\Delta}{n}\right). 
\end{eqnarray*}

 By taking $\Delta \leq \frac{1}{e}$, we obtain a unified drift along the runnning of the algorithm
 $$\mathbb{E}\left[d\left(X^{(t)}\right)-d\left(X^{(t+1)}\right)\mid d\left(X^{(t)}\right)\right] \geq \frac{d\left(X^{(t)}\right)\Delta}{n}. $$

\end{proof}

To bound $T_2$, we define phases of random random times $(P_i)_{i \geq 0}$ when $X$ returns to $\{x^{loc},x^*\}$,
\begin{align*}
        P_0 &= T_1; \\
        P_{i+1} &= \min\{k \text{ : } k \geq P_{i}+1 \text{ and } X_{k} \in \{x^{\text{loc}},x^*\} \}.    
\end{align*}

Therefore, we have
$$T_2 = \sum_{i = 1}^{N} P_{i}-P_{i-1}, $$ 
where $N$ is the random number of phases until $X$ reaches $x^*$. As the $(X^{(t)})_{t\geq0}$  verifies the Markov property, the $\left(P_{i+1} - P_{i}\right)_{i \geq 0}$ are independent and following the same law.

We first prove the following lemma,
\begin{lemma}
\label{lm:drift4}
If $p \leq \frac{m}{8e n}$, then we have along the running of the \MAHH algorithm on $\jump_m$,
$$\mathbb{E}\left[P_{i+1}-P_i\right] = O(m).$$
\end{lemma}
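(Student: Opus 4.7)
The plan is to mirror the case analysis of Lemma~\ref{lm:P} but to account for the variability of the landing distance that global mutation introduces. The two drift lemmas, Lemmas~\ref{lemma:LowerBoundOnExpectation} and~\ref{lemma:drift3}, together with the choice $p = m/(8en)$ (corresponding to $\alpha = 4$ in the potential $d$), jointly imply a uniform multiplicative drift
\[
\mathbb{E}\!\left[d(X^{(t)}) - d(X^{(t+1)}) \,\middle|\, d(X^{(t)}) > 0\right] \geq \delta \, d(X^{(t)}),
\]
for some $\delta = \Omega(1/n)$, valid on both sides of the local optimum. Since both drift lemmas explicitly permit mutations that jump across the local optimum, this drift condition holds at every state with $d > 0$ that the process may visit, so Theorem~\ref{thm:mult-drift} is applicable to the potential $d$.

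Next, I would analyse the first step out of $x^{\text{loc}}$. The ONLYIMPROVING operator (chosen with probability $1-p$) accepts only the move to $x^*$, since every other neighbour of $x^{\text{loc}}$ has $\jump_m$-value at most $\jump_m(x^{\text{loc}}) = n$ with no strict improvement. Hence, up to the negligible probability of mutating directly to $x^*$, the phase persists beyond one step only when ALLMOVES is invoked, which occurs with probability $p$. Conditioned on leaving to some state $X^{(P_i+1)} = x' \notin \{x^{\text{loc}}, x^*\}$, the number $N$ of flipped bits follows a bit-wise mutation distribution dominated by $\mathrm{Bin}(n, 1/n)$ (conditioned to be at least one), and $d(x') \leq \alpha N = 4N$ since flipping $N$ bits can change $\|x\|_1$ by at most $N$. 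Using $\log(1+x) \leq x$ and $\mathbb{E}[N] = 1$, we obtain
\[
\mathbb{E}\!\left[\log(1 + d(X^{(P_i+1)})) \,\middle|\, X^{(P_i+1)} \notin \{x^{\text{loc}}, x^*\}\right] = O(1).
\]

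Applying Theorem~\ref{thm:mult-drift} with starting value $s_0 = d(x')$ and $s_{\min}$ a positive constant yields an expected hitting time to $\{d = 0\}$, and in particular to $x^{\text{loc}}$, of $O(n(1 + \log d(x')))$; this is an upper bound on the remaining phase length. Combining via the law of total expectation,
\[
\mathbb{E}[P_{i+1} - P_i] \leq 1 + O(p) \cdot O(n) \cdot \bigl(1 + O(1)\bigr) = O(1 + pn) = O(m),
\]
where we used $p \leq m/(8en)$ in the last step. The main technical obstacle compared to Lemma~\ref{lm:P} is exactly the control of the random landing potential $d(X^{(P_i+1)})$: one-bit flips land at $d = 1$ deterministically, whereas bit-wise mutation spreads the landing over a range of distances, some of which are large enough to a priori inflate the multiplicative-drift bound beyond $O(n)$. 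The strong tail decay of $\mathrm{Bin}(n,1/n)$ paired with the merely logarithmic dependence of the Multiplicative Drift Theorem on the initial value is exactly what is needed to absorb this variability and preserve the $O(m)$ bound.
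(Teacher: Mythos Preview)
Your proposal is correct and follows essentially the same route as the paper's proof: both argue that a nontrivial phase begins only with probability $O(p)$, invoke the unified multiplicative drift from Lemmas~\ref{lemma:LowerBoundOnExpectation} and~\ref{lemma:drift3} to bound the return time by $O(n\log s_0)$ from a random landing potential $s_0$, and then use that $\mathbb{E}[\log s_0]=O(1)$ (you via $\log(1+x)\le x$ and $s_0\le 4N$, the paper via concavity of $\log$ and $\mathbb{E}[s_0]=O(1)$) to conclude $\mathbb{E}[P_{i+1}-P_i]\le 1+O(p)\cdot O(n)=O(m)$. Your write-up is in fact slightly more explicit than the paper's about why the drift condition holds across both sides of the local optimum and about the distribution of the landing distance.
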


\begin{proof}
In order to prove this lemma, we will use the multiplicative drift again. 

If the random walk stays on the local optimum, the expected duration of the phase is $1$.

In the second case, we can use Lemma \ref{lemma:drift3} with initial position $s_0 = d(X^{(P_i+1)})$. In this case, the multiplicative drift theorem gives an expected duration of $O(n\log s_0)$.

Therefore , we have, 
\begin{eqnarray*}
\mathbb{E}\left[P_{i+1}-P_i\right] &\leq & 1 + pO\left(\mathbb{E} \left[n \log s_0 \right]\right) \\
&\leq& O\left(m\mathbb{E} \left[\log s_0 \right]\right).
\end{eqnarray*}

The expected number of bits flipped at each step is equal to one, i.e., $\mathbb{E} \left[N(X)\right] = 1$. The expected value $\mathbb{E} \left[s_0 \right]$ is the expected number of bits at which we land from the local optimum after one run of mutations, therefore it is also a constant. Thanks to the concavity of $\log$, we know that $\mathbb{E} \left[\log s_0 \right]$ is smaller than a constant that does not depend on $n$. Therefore, 
$$ \mathbb{E}\left[P_{i+1}-P_i\right] =O(m).$$

\end{proof}

The path starting at the local optimum and going directly to the $1$-bit with the mutation operator by only flipping one $0$-bit has a probability of $m(1 - \frac{1}{n})^{n-m}\frac{1}{n^m}$.
Therefore the probability that the phase ends in $x^*$ is 
\begin{eqnarray*}
\mathbb{P}\left[X_{P_{i+1}} = x^*\right] &\geq& \left(1 - \frac{1}{n}\right)^{n-m}\frac{1}{n^m} \\
&\geq& \frac{1}{4n^m}.
\end{eqnarray*}

\begin{lemma}
\label{lm:drift5}
We have for a run of the \MAHH algorithm on $\jump_m$,
    $$\mathbb{E}\left[N\right] = O\left( \min\left(en^m, \frac{8^{m-1}(en)^{2m-1}}{m!m^{m-1}}\right)\right).$$
\end{lemma}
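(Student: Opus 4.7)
The plan is to exploit the strong Markov property at the phase endpoints. Each phase begins in state $x^{\text{loc}}$ and ends as soon as the trajectory returns to $\{x^{\text{loc}}, x^*\}$, so the indicators $\mathbf{1}\{X_{P_{i+1}} = x^*\}$ are i.i.d.\ Bernoulli. Hence $N$ is geometrically distributed with parameter $q := \mathbb{P}[X_{P_1} = x^* \mid X_{P_0} = x^{\text{loc}}]$, and $\mathbb{E}[N] = 1/q$. The lemma then follows if one can give two separate lower bounds on $q$, the minimum of whose reciprocals matches the claim.

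The first lower bound comes from the \emph{direct jump}: the very first mutation of the phase flips precisely the $m$ zero-bits of $x^{\text{loc}}$ and no other bit. Since this event is always accepted (it reaches $x^*$), its probability is $(1/n)^m(1-1/n)^{n-m} \geq 1/(en^m)$ for $n$ sufficiently large, yielding $1/q \leq en^m$. This is essentially the bound already recorded in the paragraph immediately preceding the lemma.

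The second lower bound is obtained by transplanting the valley trajectory from Section~\ref{sec:upper_bound} into the global-mutation setting with $p = m/(8en)$: namely, a sequence of $m$ consecutive iterations in which each mutation flips exactly one specific remaining zero-bit and leaves every other bit unchanged. When there are $m-j+1$ zero-bits left (before the $j$-th step), the probability of flipping any one of them and no other bit is $(m-j+1)\cdot\frac{1}{n}(1-1/n)^{n-1}$; the first $m-1$ of these steps are strict fitness decreases inside the valley and therefore require an ALLMOVES acceptance of probability $p$, while the last step is improving and always accepted. Multiplying and using $(1-1/n)^{n-1} \geq 1/e$ gives
\[
q \;\geq\; \frac{m!}{n^m}\,(1-1/n)^{m(n-1)}\, p^{m-1} \;\geq\; \frac{m!}{(en)^m}\, p^{m-1},
\]
and substituting $p = m/(8en)$ yields $q \geq \frac{m!\,m^{m-1}}{8^{m-1}(en)^{2m-1}}$, i.e.\ $1/q \leq \frac{8^{m-1}(en)^{2m-1}}{m!\,m^{m-1}}$.

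Taking the minimum of the two bounds on $1/q$ produces the stated estimate. I do not expect any deep obstacle here: the argument is a direct consequence of the Markov property combined with the probabilities of two explicit trajectories. The only delicate point is bookkeeping: in contrast to the one-bit analysis of Lemma~\ref{lm:N}, each step of the valley trajectory under global mutation picks up an extra factor $(1-1/n)^{n-1}$, and it is precisely this factor, compounded $m$ times and combined with $p^{m-1}$ for $p = m/(8en)$, that produces the $8^{m-1}(en)^{2m-1}$ in the denominator of the valley bound.
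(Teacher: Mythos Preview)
Your proposal is correct and follows essentially the same approach as the paper: both argue that $N$ is geometric with success parameter bounded below by the maximum of the direct-jump probability $\ge 1/(en^m)$ and the valley-trajectory probability $\ge \tfrac{m!}{(en)^m}p^{m-1}$, then substitute $p=\tfrac{m}{8en}$ and take the minimum of the reciprocals. If anything, you spell out the bookkeeping for the valley path under global mutation (the extra $(1-1/n)^{n-1}$ per step) more explicitly than the paper, which simply cites the analogous computation from Lemma~\ref{lm:N}.
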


\begin{proof}
At each new phase $P_i$, for $i \geq 0$, the algorithm has the same probability of reaching the global maximum during the next phase via a global mutation of exactly the right bits with probability at least $ \frac{1}{en^m}$. 
We can furthermore consider the option of arriving at the global optimum as the result of a succession of individual bit flips moving us in the right direction, as in Lemma \ref{lm:N}, the probability of taking such a path is at least $\frac{m!}{(en)^m}p^{m-1}$.

Therefore, in the geometric distribution of the random variable $N$ we can take both success probabilities into account in the distributional parameter, which can be lower bounded as follows, 
$\mathbb{P}\left[X_{P_{i+1}} = x^*\right] \geq \frac{1}{en^m} + \frac{m!}{(en)^m}p^{m-1}\geq \max\left(\frac{1}{en^m}, \frac{m!}{(en)^m}p^{m-1}\right)$, for $p=\frac{m}{8en},$
$$\mathbb{E}\left[N\right] = \frac{1}{\mathbb{P}\left[X_{P_{i+1}} = x^*\right]} \leq \min\left(en^m, \frac{8^{m-1}(en)^{2m-1}}{m!m^{m-1}}\right).$$



\end{proof}

In Lemma \ref{lm:drift5} we observe how the combination of algorithms allows us to increase the probability of reaching the global optimum, which serves as motivation to further study the combination of well chosen algorithms in future research. 

We can now combine the results of Lemmas~\ref{lm:drift4} and \ref{lm:drift5} with Wald's formula (Theorem \ref{thm:wald}), to obtain that if $p = \frac{m}{8en}$
\begin{eqnarray*}
\mathbb{E}\left[T_2\right] &=& \mathbb{E}\left[N\right]\mathbb{E}\left[P_1-P_0\right]\\
&=&O(m\mathbb{E}\left[N\right])\\
&=& O\left(m \min\left(en^m, \frac{8^{m-1}(en)^{2m-1}}{m!m^{m-1}}\right)\right).
\end{eqnarray*}

}{}
  
	}
\end{document}